\documentclass{article}

\usepackage[main, final]{neurips_2026}

\usepackage[utf8]{inputenc} 
\usepackage[T1]{fontenc}    
\usepackage{hyperref}       
\usepackage{url}            
\usepackage{booktabs}       
\usepackage{amsfonts}       
\usepackage{nicefrac}       
\usepackage{microtype}      
\usepackage{xcolor}         

\usepackage{enumitem}
\usepackage{etoolbox}
\usepackage{array}
\usepackage[table]{xcolor}
\usepackage{graphicx}
\usepackage{wrapfig}
\usepackage{capt-of}
\usepackage{amsmath}
\usepackage{amssymb}
\usepackage{mathtools}

\usepackage{amsthm}
\newtheoremstyle{definition_styles}
  {\topsep}
  {\topsep}
  {\normalfont}
  {}
  {\bfseries}
  {.}
  {.5em}
  {\thmname{#1}\thmnumber{ #2}\thmnote{ (\bfseries #3)}}
\theoremstyle{definition_styles}

\newtheorem{theorem}{Theorem}
\newtheorem{lemma}{Lemma}
\newtheorem{assumption}{Assumption}

\newtheorem{remark}{Remark}
\newtheorem{corollary}{Corollary}

\makeatletter
\renewcommand\paragraph{\@startsection{paragraph}{4}{\z@}%
  {0.0ex \@plus 0.2ex \@minus 0.1ex}%
  {-1em}%
  {\normalfont\normalsize\bfseries}}
\renewenvironment{proof}[1][\proofname]{\par
  \pushQED{\qed}%
  \normalfont
  \topsep=0.6ex \@plus 0.1ex \@minus 0.1ex\relax
  \partopsep=\z@
  \trivlist
  \item[\hskip\labelsep\itshape #1\@addpunct{.}]\ignorespaces
}{%
  \popQED\endtrivlist\@endpefalse
}
\makeatother

\title{Learning Provable Neural Network Observer for Uncertain Dynamical Systems}

\newcommand{\affmark}[1]{\textsuperscript{#1}}

\makeatletter
\patchcmd{\@maketitle}{\rule{\z@}{24\p@}\@author}{\rule{\z@}{14\p@}\@author}{}{\PackageError{paper-layout}{Author padding adjustment failed}{}}
\patchcmd{\@maketitle}{\vskip 0.3in \@minus 0.1in}{\vskip 0.12in \@minus 0.04in}{}{\PackageError{paper-layout}{Affiliation padding adjustment failed}{}}
\makeatother

\author{%
	Zhangyi Wang\affmark{1} \quad
	Jiaxu Liu\affmark{2,}\thanks{Corresponding authors.} \quad
	Song Chen\affmark{3,*} \quad
	Chao Xu\affmark{1} \quad
	Shengze Cai\affmark{1,*} \\
	\normalfont\footnotesize \affmark{1}College of Control Science and Engineering, Zhejiang University, China \\
	\normalfont\footnotesize \affmark{2}School of Science, Huzhou Normal University, China \\
	\normalfont\footnotesize \affmark{3}Department of Mathematics, National University of Singapore, Singapore \\
}

\begin{document}

\maketitle

\begin{abstract}
	In many safety-critical applications, control of uncertain dynamical systems relies on observers that estimate states and external disturbances. Neural network observers can improve estimation accuracy, but certifying their Lyapunov stability via Linear Matrix Inequality (LMI) constraints leads to large-scale semidefinite programs (SDPs) that are difficult to solve for large networks. To overcome this scalability bottleneck, we propose a novel two-stage training framework for provably stable neural network observers. Our approach decouples the optimization into \textbf{a point-guided Lyapunov pre-training phase}, which rapidly achieves high estimation accuracy and local stability over sampled states, followed by \textbf{an LMI fine-tuning phase} that efficiently satisfies a strict global Lyapunov stability certificate. We provide formal theoretical guarantees for local stability radii and probabilistic coverage over a prescribed compact error-state domain under specified regularity and sampling assumptions. Experiments on nonlinear control benchmarks and X-29 aircraft ablations show that our LMI-certified neural network observers train significantly faster than direct LMI-based methods and generalize robustly across diverse systems, achieving improved tracking accuracy over a range of observer baselines. The code is available at \url{https://github.com/Berry-Myon/LearningNeuralNetworkObserver}.
\end{abstract}

\section{Introduction}

The control and observation of real-world dynamical systems are fundamentally challenged by the presence of external disturbances and unmodeled dynamics, which often appear as complex black-box effects in practical engineering \cite{BANASZUK201177, das2025dronediffusion}. This challenge is particularly critical in safety-critical domains such as plasma magnetic control \cite{de2019plasma, tartaglione2022plasma} and high-precision trajectory tracking for autonomous aerial vehicles \cite{shi2019neural, tian2023high, alqudsi2024advanced}. As demonstrated in recent studies on Neural Lander \cite{shi2019neural}, accurately estimating state-dependent uncertainties is essential for maintaining stability in unpredictable environments. Therefore, developing robust state and disturbance estimation techniques is critical for safely tracking desired trajectories and maintaining overall system stability in such environments \cite{alan2022disturbance, IZADI2024105854, wang2023disturbance}.

In uncertain dynamical systems, many observer designs for state estimation have been developed with rigorous theoretical guarantees. Classical approaches such as Sliding Mode Observers (SMO) \cite{xiong2001sliding, tan2001lmi, shtessel2014sliding}, Extended State Observers (ESO) \cite{talole2009extended, guo2011ESO, pu2015class}, and $H_\infty$-based filters \cite{karimi2009robust, zhao2018decentralized, zhao2019theoretical} provide robustness against disturbances and modeling errors, but typically rely on linear approximations or specific structural assumptions on the uncertainty. As a result, their performance degrades when faced with strong nonlinearities or significant model-plant mismatch. In parallel, machine learning has increasingly been applied to estimation and control \cite{deisenroth2011pilco, berkenkamp2017safe, chow2018lyapunov, hewing2020learning}, with neural network observers emerging as a powerful tool for capturing complex, unstructured uncertainties \cite{chen2017neural, hou2022neural, pmlr-v235-yang24f}. Most relevant to our setting, \cite{chen2023neural} introduced a neural observer for uncertain nonlinear systems that augments the observer dynamics with a neural network term and derives a Lyapunov stability certificate through a Linear Matrix Inequality (LMI) condition on the network parameters, thereby showing that expressive neural observers can be made provably stable. Complementary analyses based on quadratic constraints and semidefinite programming (SDP) relaxations have provided tractable stability conditions for dynamical systems with neural network controllers \cite{yin2021stability, Fazlyab2019Lipschitz}, establishing an important analytical foundation for certifying neural architectures in feedback loops. \cite{dawson2023safe} further surveyed neural Lyapunov, barrier, and contraction certificate methods, emphasizing both the promise of learned certificates and the practical difficulty of scaling formal guarantees to complex neural models. However, current neural network controller and observer frameworks face a critical computational bottleneck: certifying Lyapunov stability typically requires solving LMIs \cite{https://doi.org/10.1002/rnc.4528, Fazlyab2019Lipschitz, Pauli2022NeuralNT}. For high-dimensional systems or large-scale networks, the resulting SDP tasks become extremely slow and often numerically intractable \cite{doi:10.1137/110825844, Fazlyab2019Lipschitz, Pauli2022NeuralNT}. 

In response, related machine learning work has pursued direct Lyapunov-guided learning of certified neural controllers \cite{chang2019neural, zhou2022unknown, wu2023discrete}, while recent neural observer approaches often avoid direct end-to-end certificate-constrained optimization and instead rely on falsification, mixed-integer or satisfiability modulo theories (SMT) verification, or post-hoc certification \cite{pmlr-v235-yang24f}. Inspired by the expressive power of deep learning and the need for scalable certification, this paper proposes a new framework to resolve this conflict.

\paragraph{Main Contributions.}
Our work builds on classical Lyapunov stability analysis and LMI-based certification techniques for nonlinear dynamical systems, and develops a general training framework for large-scale, LMI-constrained learning architectures. Specializing this framework to neural network observers for uncertain systems, our main contributions are:
\begin{itemize}[leftmargin=1em,topsep=4pt,itemsep=4pt,parsep=0pt,partopsep=0pt]
	\item \textbf{A scalable two-stage training framework (Section \ref{sec-method}).} We propose a two-stage pipeline that decouples expressive observer learning from global stability certification: Stage~I performs point-guided Lyapunov pre-training over sampled error states to quickly learn a high-capacity neural network observer with strong empirical estimation performance, while Stage~II starts from the above warm initialization and applies a lightweight LMI fine-tuning step to recover a strict global Lyapunov stability certificate. By avoiding full-scale LMI-constrained optimization throughout training, this framework makes LMI-certified neural network observers practical for deep, high-capacity architectures beyond the reach of standard SDP-based methods.
	\item \textbf{Point-guided Lyapunov pre-training with theoretical characterization (Section \ref{sec-analysis}).} We introduce a sampling-based Lyapunov loss that enforces strict Lyapunov decrease at sampled error states, thereby giving Stage~I a stability-oriented objective that provides a Lyapunov-based warm start for LMI certification. Under mild Lipschitz assumptions on the observer nonlinearity, we show that this sampled decrease condition induces explicit local stability neighborhoods around each sample and yields a probabilistic coverage interpretation over a prescribed compact error-state domain under a sampling distribution with full support on that domain. These results characterize the local stability structure promoted by point-guided pre-training before the final global LMI certification step (Theorems \ref{thm-local_radius} and \ref{thm-pos}).
	\item \textbf{Comprehensive empirical validation (Section \ref{sec-experiments} and Appendix~\ref{sec-exp1}).} We evaluate the proposed framework on a quadrotor unmanned aerial vehicle (Quad-UAV) under ground effect and an autonomous underwater vehicle (AUV) subject to complex fluid disturbances, comparing against data-driven and classical ESO-based baselines. We report X-29 aircraft training-strategy ablations in Section~\ref{sec-ablation} and additional capacity and robustness results in Appendix~\ref{sec-exp1}. Across these settings, the final trained observers retain formal stability certificates while improving tracking and disturbance rejection.
\end{itemize}

\section{Background and Motivation}\label{sec-pro}

This section introduces the uncertain-system setting, the neural network observer studied in this paper, and the Lyapunov stability-certification bottleneck that motivates our two-stage framework.

Consider a general uncertain nonlinear dynamical system subject to external disturbances and unmodeled dynamics, represented as follows:
\begin{equation}
	\begin{cases}
		\dot{x}(t) = Ax(t) + Bu(t) + B_\omega \mathcal{K}(x, d, t), \\
		y(t) = Cx(t),
	\end{cases}
\end{equation}
where $x(t) \in \mathbb{R}^{n_{in}}$ is the system state, $u(t) \in \mathbb{R}^{n_u}$ is the control input, $y(t) \in \mathbb{R}^{n_{out}}$ is the measured output, $\mathcal{K}(x, d, t)$ encapsulates both the nonlinear dynamics and the external disturbances, and the system matrices $A, B, C, B_\omega$ are known and fixed. Since the internal state is often difficult to measure directly, state observation is essential in uncertain systems, and many control laws are built on the observed state. For this reason, the observer is designed to provide accurate state and disturbance estimation so that the closed-loop system can track a desired trajectory $x_d(t)$ under the uncertainty induced by $\mathcal{K}(x, d, t)$.

Following \cite{guo2011ESO, han2009ADRC, chen2023neural}, a neural network observer is designed for augmented-state estimation using the output estimation error as feedback:
\begin{equation}
	\begin{cases}
		\dot{\hat{x}}_1(t) = A\hat{x}_1(t) + Bu(t) + B_\omega \hat{x}_2(t) + \pi_{\theta_1}(\epsilon^{-1}(y(t) - \hat{y}(t))), \\
		\dot{\hat{x}}_2(t) = \epsilon^{-1} \pi_{\theta_2}(\epsilon^{-1}(y(t) - \hat{y}(t))),                                        \\
		\hat{y}(t) = C\hat{x}_1(t),
	\end{cases}
\end{equation}
where $\hat{x}_1(t)$ and $\hat{x}_2(t)$ denote the observer outputs for state estimation and disturbance estimation, respectively. The function $\pi_\theta = [\pi_{\theta_1}^\top, \pi_{\theta_2}^\top]^\top$ is a neural network parameterized by $\theta$, and $\epsilon > 0$ is a tunable scaling factor. A concrete residual-network parameterization of $\pi_\theta$ is provided in Appendix~\ref{apd-observer-param}.

Defining the scaled state estimation error and the lumped disturbance estimation error as $\eta_1 = \epsilon^{-1} (x - \hat{x}_1)$ and $\eta_2 = \mathcal{K}(x,d,t) - \hat{x}_2$, respectively, the observer induces the error dynamics
\begin{equation}
	\dot{\eta}(t) = A_\epsilon \eta(t) - \nu(\eta, \theta) + \epsilon w(t),
\end{equation}
where $\eta = [\eta_1^\top, \eta_2^\top]^\top$, $A_\epsilon = \begin{bmatrix} \epsilon A & B_\omega \\ O & O \end{bmatrix}$, $\nu^\top(\eta, \theta) = \begin{bmatrix}
		\pi_{\theta_1}(\eta_1) \\
		\pi_{\theta_2}(\eta_1)
	\end{bmatrix}$, and $w(t) \triangleq \frac{d}{dt}\mathcal{K}(x,d,t)$ denotes the time derivative of the lumped disturbance term. This architecture allows the observer to capture high-frequency uncertainties that linear Luenberger-type observers fail to model.

Given the observer error dynamics above, the next question is how to guarantee that the estimation errors decay despite the unknown disturbance term. A natural route is to study a quadratic Lyapunov function for the error system and derive a sufficient decrease condition that can be enforced during training. Specifically, by differentiating $V(\eta)=\eta^\top P\eta$ along the error dynamics and using the disturbance bound, we obtain the pointwise inequality in Eq.~(\ref{eqn-Point}), leading to the following criterion.

\begin{lemma}[\cite{khalil2002nonlinear}]\label{thm-lyapunov}
	Consider the quadratic Lyapunov candidate $V(\eta) = \eta^\top P \eta$ with a symmetric positive definite matrix $P \succ O$. Assume that the lumped disturbance derivative is bounded such that $\|w(t)\| \le M_w$. If there exists a margin $\rho > 0$ such that
	\begin{equation}\label{eqn-Point}
		\eta^\top(A_\epsilon^\top P + P A_\epsilon + \rho P)\eta
		- 2\nu^\top(\eta, \theta) P \eta
		+ 2\epsilon M_w \|\eta^\top P\|
		\le 0,
	\end{equation}
	holds for all $\eta$ in the considered domain, then the time derivative of $V$ satisfies $\dot{V} \le -\rho V$, establishing the desired Lyapunov stability of the error system.
\end{lemma}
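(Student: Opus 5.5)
The approach is to differentiate $V(\eta)=\eta^\top P\eta$ along the error dynamics, bound the disturbance term by Cauchy--Schwarz, and then use Eq.~(\ref{eqn-Point}) to absorb every remaining term into $-\rho V$.

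\textbf{Derivative of $V$.} Since $P$ is symmetric, $\dot V = \dot\eta^\top P\eta + \eta^\top P\dot\eta = 2\eta^\top P\dot\eta$. Substituting $\dot\eta = A_\epsilon\eta - \nu(\eta,\theta) + \epsilon w(t)$ gives
\begin{equation*}
\dot V = \eta^\top(A_\epsilon^\top P + PA_\epsilon)\eta - 2\nu^\top(\eta,\theta)P\eta + 2\epsilon\, w^\top(t) P\eta .
\end{equation*}
Here the scalar identity $2\eta^\top PA_\epsilon\eta = \eta^\top(A_\epsilon^\top P+PA_\epsilon)\eta$ has been used. This holds wherever $\eta(t)$ is differentiable, which is fine because $\nu$ is Lipschitz by the paper's standing regularity, so trajectories are absolutely continuous.

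\textbf{Disturbance bound.} The only uncertain term is the last one. By Cauchy--Schwarz and the assumption $\|w(t)\|\le M_w$,
\begin{equation*}
2\epsilon\, w^\top P\eta \le 2\epsilon\|w\|\,\|P\eta\| \le 2\epsilon M_w\|\eta^\top P\|,
\end{equation*}
using $\epsilon>0$ and $\|P\eta\| = \|\eta^\top P\|$. Therefore
\begin{equation*}
\dot V + \rho V \le \eta^\top(A_\epsilon^\top P + PA_\epsilon + \rho P)\eta - 2\nu^\top P\eta + 2\epsilon M_w\|\eta^\top P\|,
\end{equation*}
and the right-hand side is $\le 0$ by Eq.~(\ref{eqn-Point}) for every $\eta$ in the domain. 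This yields $\dot V\le -\rho V$.

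\textbf{Stability conclusion.} Applying the comparison lemma (Grönwall) gives $V(\eta(t))\le e^{-\rho t}V(\eta(0))$. Combined with $\lambda_{\min}(P)\|\eta\|^2\le V\le\lambda_{\max}(P)\|\eta\|^2$, this implies exponential convergence of $\eta$, which is the claimed Lyapunov stability.

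\textbf{Main obstacle.} The calculation itself is routine; the delicate point is that inequality~(\ref{eqn-Point}) is only assumed on ``the considered domain''. The conclusion therefore holds only while the trajectory stays in that domain. I would handle this by noting that if the domain contains a sublevel set $\{V\le c\}$, then $\dot V\le -\rho V<0$ on its boundary makes the set forward invariant, so trajectories starting there never leave and the bound holds for all $t\ge 0$. If the domain is all of $\mathbb{R}^{n}$, no such argument is needed.
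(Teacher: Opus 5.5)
Your computation is correct and follows the same route the paper indicates for this cited lemma: differentiate $V$ along the error dynamics, bound $2\epsilon w^\top P\eta \le 2\epsilon M_w\|P\eta\|$, and read off $\dot V+\rho V\le F_V(\eta,\theta)\le 0$, which is exactly how the paper later uses it. One caution on your last paragraph: the paper's bias-free \texttt{tanh} network has $\nu(0)=0$, so when $\epsilon M_w>0$ the term $2\epsilon M_w\|P\eta\|$ is linear in $\|\eta\|$ while the rest of the left side of Eq.~(\ref{eqn-Point}) is $O(\|\eta\|^2)$, which means the inequality fails at every small nonzero $\eta$; hence no sublevel set $\{V\le c\}$ (let alone the whole space) can lie in the domain, and the conclusion actually available is ultimate boundedness (trajectories starting in an annulus $\{c_1\le V\le c_2\}$ inside the domain enter $\{V\le c_1\}$ in finite time) rather than exponential convergence to zero.
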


Lemma~\ref{thm-lyapunov} shows that the desired Lyapunov stability property can be guaranteed if the pointwise inequality in Eq.~(\ref{eqn-Point}) holds over the domain of interest. However, directly checking this condition for all $\eta$ is difficult because experiments or sampled evaluations cannot exhaust a continuous domain. Prior work therefore replaces the pointwise condition with a sufficient global matrix inequality in the network parameters, stated next.

\begin{lemma}[\cite{chen2023neural}]\label{thm-LMI}
	The Lyapunov stability condition in Lemma \ref{thm-lyapunov} can be enforced via an LMI
	\begin{equation}\label{eqn-LMI}
		\mathcal{H}(\theta) \triangleq \hat{R}_{\pi}^\top(\theta)
		\begin{bmatrix}
			A_\epsilon^\top P + P A_\epsilon & -P \\
			-P                                     & O
		\end{bmatrix}
		\hat{R}_{\pi}(\theta) + \hat{R}_{\xi}^\top(\theta)\, \Psi^\top M \Psi\, \hat{R}_{\xi}(\theta) \prec O,
	\end{equation}
	where matrices $\hat{R}_{\pi}(\theta)$ and $\hat{R}_{\xi}(\theta)$ are constructed from the neural network weights as detailed in Appendix~\ref{apd-observer-param}, $\Psi$ is the sector-boundedness matrix associated with the activation function, and $M = \begin{bmatrix} O & \Lambda \\ \Lambda & O \end{bmatrix}$ is the gain matrix with $\Lambda$ being a positive definite diagonal matrix. For the \texttt{tanh} activation, $\Psi = \begin{bmatrix} I & -I \\ O & I \end{bmatrix}$.
\end{lemma}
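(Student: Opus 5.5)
The plan is the standard quadratic-constraint (S-procedure) argument. We rewrite the pointwise inequality of Lemma~\ref{thm-lyapunov} as a quadratic form in a stacked vector of the error $\eta$, the network output $\nu$, and the hidden pre-activation/activation pairs. The sector bounds of the activation are then added with nonnegative multipliers, so that negative definiteness of $\mathcal{H}(\theta)$ implies the pointwise inequality for every $\eta$.

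\textbf{Step 1: linear parameterization of the network.} Using the residual parameterization of Appendix~\ref{apd-observer-param}, collect the hidden pre-activations $v$ and activations $z=\phi(v)$ into $\xi=[v^\top,z^\top]^\top$, and set $\zeta=[\eta^\top,z^\top]^\top$ (plus a bias coordinate if needed). By construction, $\hat{R}_\pi(\theta)$ and $\hat{R}_\xi(\theta)$ are the weight-dependent matrices with
\[
[\eta^\top,\nu^\top]^\top=\hat{R}_\pi(\theta)\zeta,\qquad \xi=\hat{R}_\xi(\theta)\zeta .
\]
With this,
\[
\eta^\top(A_\epsilon^\top P+PA_\epsilon)\eta-2\nu^\top P\eta=\zeta^\top\hat{R}_\pi^\top\begin{bmatrix}A_\epsilon^\top P+PA_\epsilon&-P\\-P&O\end{bmatrix}\hat{R}_\pi\zeta .
\]

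\textbf{Step 2: sector constraint for \texttt{tanh}.} Since $\tanh$ is slope-restricted in $[0,1]$ and satisfies $\tanh(0)=0$, each coordinate obeys $z_i(v_i-z_i)\ge0$. Hence $\xi^\top\Psi^\top M\Psi\xi=2(v-z)^\top\Lambda z\ge0$ for any diagonal $\Lambda\succ O$, and $\Psi$ is exactly the map $[v;z]\mapsto[v-z;z]$.

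\textbf{Step 3: S-procedure.} Because $\mathcal{H}(\theta)\prec O$, there is $\mu>0$ with $\mathcal{H}\preceq-\mu I$. Therefore
\[
\zeta^\top\hat{R}_\pi^\top(\cdot)\hat{R}_\pi\zeta\le-\mu\|\zeta\|^2-\xi^\top\Psi^\top M\Psi\xi\le-\mu\|\eta\|^2 .
\]

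\textbf{Step 4: absorbing the remaining terms (main obstacle).} Two terms in Eq.~(\ref{eqn-Point}) remain: $\rho\,\eta^\top P\eta$ and $2\epsilon M_w\|P\eta\|$. The first is handled by choosing $\rho\le\mu/\lambda_{\max}(P)$.

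The disturbance term is linear, not quadratic, in $\|\eta\|$, so a strict LMI alone cannot dominate it near the origin. I would handle it in one of two ways.
\begin{itemize}
\item Follow \cite{chen2023neural} and prove ultimate boundedness instead. Using Young's inequality, $2\epsilon M_w\|P\eta\|\le\tfrac{\mu}{2}\|\eta\|^2+\tfrac{2\epsilon^2M_w^2\lambda_{\max}(P)^2}{\mu}$. This gives $\dot V\le-\rho V$ outside a ball of radius $O(\epsilon)$, and this ball shrinks as $\epsilon\to0$.
\item Alternatively, read ``enforced'' in the stated domain as holding outside that small ball, with $\mu$ taken large enough relative to $\epsilon M_w$.
\end{itemize}
I expect this step, namely matching the quadratic certificate to the linear disturbance term and fixing the exact sense of the domain, to require the most care. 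Steps 1--3 are routine bookkeeping once the construction of $\hat{R}_\pi$ and $\hat{R}_\xi$ is written out.
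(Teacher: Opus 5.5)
The paper gives no proof here beyond citing \cite{chen2023neural}. Your Steps 1--3 are the quadratic-constraint argument on which that LMI is built: you isolate the network through $\hat R_\pi,\hat R_\xi$, encode the global $[0,1]$ sector bound of $\tanh$ via $\Psi^\top M\Psi$, and apply the S-procedure. That part is correct and is essentially the intended route.

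Your Step 4 concern is a real issue with the statement, not a technicality. Because $\nu(0,\theta)=0$ (no biases, $\tanh(0)=0$) and $\nu$ is Lipschitz, you have $F_V(\eta,\theta)\ge 2\epsilon M_w\lambda_{\min}(P)\|\eta\|-(\|Q\|+2\|P\|L_\nu)\|\eta\|^2>0$ for all sufficiently small $\eta\neq0$, where $Q=A_\epsilon^\top P+PA_\epsilon+\rho P$. So Eq.~(\ref{eqn-Point}) fails near the origin for every $\theta$ whenever $\epsilon M_w>0$. The ultimate-boundedness reading you propose is therefore the right one, and it matches the paper's own remark that the error vanishes as $\epsilon\to0$.

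One bookkeeping fix is needed. You spend the whole margin $\mu$ on the $\rho P$ term and then another $\mu/2$ in Young's inequality. It is cleaner to skip Young and write $F_V(\eta,\theta)\le-(\mu-\rho\lambda_{\max}(P))\|\eta\|^2+2\epsilon M_w\lambda_{\max}(P)\|\eta\|$. This is nonpositive once $\|\eta\|\ge 2\epsilon M_w\lambda_{\max}(P)/(\mu-\rho\lambda_{\max}(P))$, which is $O(\epsilon)$. It requires $\rho\lambda_{\max}(P)<\mu$ strictly. Since $\mathcal{H}(\theta)$ contains no $\rho P$ term, $\rho$ is dictated by the LMI margin rather than prescribed in advance.
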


Therefore, adopting the above neural network observer for observer-based control and tracking design inevitably requires solving the LMI in Eq.~(\ref{eqn-LMI}) to satisfy the condition in Lemma~\ref{thm-LMI}. Although this condition provides a rigorous global Lyapunov stability certificate, solving it for high-capacity neural network observers is extremely challenging in practice.

\textbf{The difficulty comes from three related aspects.} \emph{First}, the estimation performance of neural network observers depends strongly on the expressive power of $\pi_\theta$: in complex aerospace or fluid dynamics systems, small-scale networks often fail to capture high-frequency uncertainties, whereas increasing network width and depth rapidly enlarges the parameter dimension of $\theta$. \emph{Second}, standard LMI solvers such as interior-point methods scale poorly with the size of the matrix variables, and in Eq.~(\ref{eqn-LMI}) the dimension of $\mathcal{H}(\theta)$ grows directly with the number of neurons and layers, making the resulting SDP computationally prohibitive for high-capacity networks. \emph{Third}, enforcing Eq.~(\ref{eqn-LMI}) directly during gradient-based training is itself numerically unstable and slow, because it turns the optimization over $\theta$ into a highly non-convex constrained problem.

\paragraph{Research Objective.} How can we train a \emph{high-capacity neural network observer} that still satisfies the \emph{global LMI certificate} in Lemma~\ref{thm-LMI}, without paying the \emph{prohibitive cost of solving large-scale SDPs} throughout training? This tension between expressive estimation and scalable certification is exactly what motivates the two-stage strategy developed in Section~\ref{sec-results}. Once the LMI-constrained Lyapunov certification problem for neural network observers is recast from large-scale SDP solving into a trainable optimization problem, the same framework can be readily extended to broader neural-network-based control systems that require scalable stability certificates.

\section{Main Results}\label{sec-results}

In this section, we first present the proposed two-stage training framework for scalable, stability-certified neural network observers, and then provide the associated Lyapunov-based convergence and stability analysis.

\subsection{Method Development}\label{sec-method}

The core idea is to decouple \emph{expressive observer learning} from \emph{global stability certification}. We first use point-guided pre-training, then optimize under the full LMI constraint from this initialization.

\begin{figure}[t]
	\centering
	\includegraphics[width=0.95\linewidth]{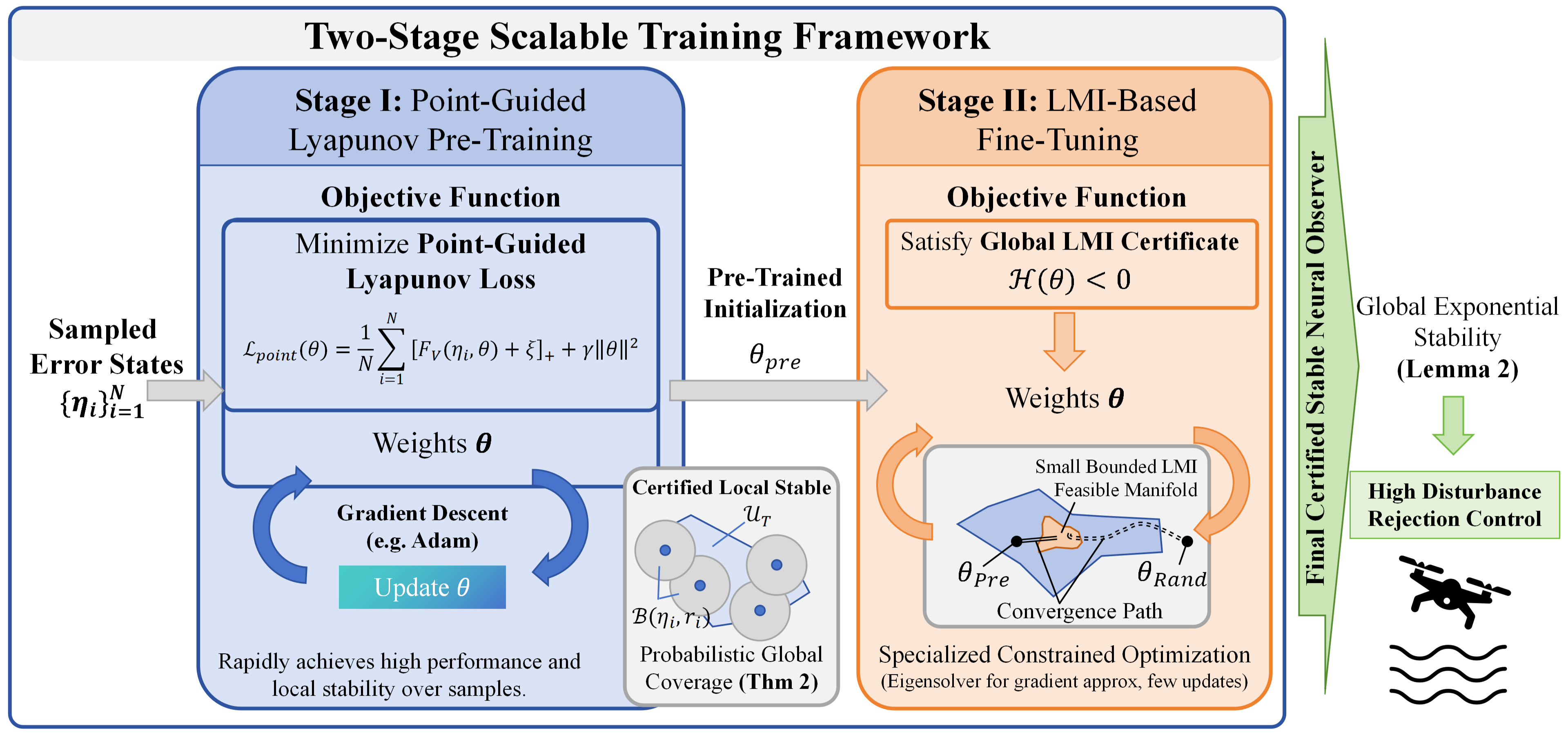}
	\caption{\textbf{Two-stage training framework for neural network observers.} Stage~I uses sampled error states for point-guided Lyapunov pre-training, and Stage~II applies lightweight LMI fine-tuning to obtain an observer satisfying the global LMI-based Lyapunov stability certificate.}
	\label{fig-Frm}
\end{figure}

\paragraph{Stage I: Point-Guided Lyapunov Pre-Training.}

The first stage learns a high-capacity observer that is locally stable around many sampled error states, without solving a large-scale LMI or SDP.

Let $\mathcal{X}$ be a prescribed compact domain in the error state space and let $\mathcal{D} = \{\eta_i\}_{i=1}^N$ be error states sampled from $\mathcal{X}$ (e.g., from a truncated Gaussian or uniform distribution). We consider the quadratic Lyapunov candidate $V(\eta) = \eta^\top P \eta$ with $P \succ O$ and define the computable Lyapunov upper bound
\begin{equation}
	\begin{aligned}
		F_V(\eta, \theta)
		 & \triangleq \eta^\top(A_\epsilon^\top P + P A_\epsilon + \rho P)\eta
		- 2\nu(\eta, \theta)^\top P\eta
		+ 2\epsilon M_w\|P\eta\|,
	\end{aligned}
\end{equation}
which satisfies $\dot{V}(\eta) + \rho V(\eta) \le F_V(\eta,\theta)$ by Lemma~\ref{thm-lyapunov}. Here, $\rho > 0$ is a desired decay rate and $M_w$ bounds the lumped disturbance derivative. This upper bound suggests a surrogate objective for Stage~I that directly penalizes violations of the Lyapunov decrease condition at sampled error states. These pointwise evaluations reduce the computational cost associated with enforcing the global LMI throughout high-capacity network training. In this way, the training objective already encourages the observer toward a stability-consistent region while preserving the efficiency and flexibility of standard gradient-based optimization. Based on this idea, we introduce the \emph{point-guided Lyapunov loss}
\begin{equation}\label{eq:point-loss}
	\mathcal{L}_{\mathrm{point}}(\theta)
	= \frac{1}{N} \sum_{i=1}^N \big[ F_V(\eta_i, \theta) + \xi \big]_+
	+ \gamma \|\theta\|^2,
\end{equation}
where $[\cdot]_+ = \max(0, \cdot)$ is the ReLU operation, $\xi > 0$ is a strict Lyapunov margin, and $\gamma > 0$ is a regularization coefficient. As shown formally in Lemma~\ref{lem-local}, once the data-dependent violation term vanishes, the Lyapunov decrease condition is strictly satisfied at all sampled points.

In Stage~I, illustrated on the left of Figure~\ref{fig-Frm}, we sample error states $\{\eta_i\}$ from $\mathcal{X}$ and minimize $\mathcal{L}_{\mathrm{point}}(\theta)$ by stochastic gradient descent to obtain a pre-trained parameter $\theta_{\mathrm{pre}}$. The purpose of this pre-training stage is to cheaply inject Lyapunov structure into the observer before imposing the full certificate: it quickly pushes the parameters away from clearly unstable regions and toward a near-feasible one using only pointwise evaluations. As a result, Stage~I preserves the scalability of unconstrained neural network training, exploits the expressive power of deep architectures, and provides a much better warm start for the subsequent LMI-based refinement than random initialization, all without solving any SDP.

\paragraph{Stage II: LMI-Based Fine-Tuning.}

Stage~I promotes stability over sampled states, and Stage~II enforces the global LMI condition in Lemma~\ref{thm-LMI}. The second stage, depicted on the right of Figure~\ref{fig-Frm}, therefore starts from $\theta_{\mathrm{pre}}$ and performs a fine-tuning phase to strictly satisfy the LMI $\mathcal{H}(\theta) \prec O$, thereby recovering the global LMI-based Lyapunov certificate from Lemma~\ref{thm-LMI} for the error system. In the LMI formulation of \cite{chen2023neural}, once this condition is satisfied, the observation error approaches zero as the gain $\epsilon$ is chosen sufficiently small.
We formulate the fine-tuning problem as a penalty-based optimization initialized at $\theta_{\mathrm{pre}}$:
\begin{equation}
	\min_{\theta}\ \mathcal{L}_{\mathrm{LMI}}(\theta)
	= \phi\big(\lambda_{\max}(\mathcal{H}(\theta))\big)
	+ \beta \|\theta\|^2,
\end{equation}
where $\lambda_{\max}(\cdot)$ denotes the largest eigenvalue, $\phi(\cdot)$ is a monotonically increasing penalty function that heavily penalizes positive eigenvalues (i.e., LMI violations), and $\beta>0$ is a regularization coefficient.

In Stage~II, we initialize at $\theta_{\mathrm{pre}}$ and fine-tune the network with the LMI penalty until $\mathcal{H}(\theta) \prec O$. Empirically, Stage~I often places the parameters closer to the feasible region than random initialization, which can reduce the number of updates needed in this stage.

\paragraph{Computational Advantages and Overall Pipeline.}

Our two-stage framework, summarized schematically in Figure~\ref{fig-Frm}, directly addresses the main scalability bottleneck of prior LMI-based neural network observer training. Stage~I uses an inexpensive, sample-based Lyapunov loss to quickly shape the parameters into a stability-oriented region, thereby exploiting the expressive power of deep networks without solving any SDPs. Stage~II then starts from $\theta_{\mathrm{pre}}$ and optimizes the LMI penalty to obtain a strictly feasible $\theta$. As shown empirically in Section~\ref{sec-ablation}, this warm start can reduce the practical optimization burden of the final certification step.

\subsection{Convergence and Stability Analysis}\label{sec-analysis}

We now analyze the Lyapunov structure induced by the two-stage method. The analysis proceeds in three steps to match the sample-based design of Stage~I. \emph{First,} Lemma~\ref{lem-local} establishes the direct consequence of minimizing the point-guided loss: once the sample-wise violation term vanishes, strict Lyapunov decrease holds at all sampled points. \emph{Second,} Theorem~\ref{thm-local_radius} and Corollary~\ref{cor-rad} extend these pointwise guarantees to the regions between samples by showing that each successfully trained sample induces a nontrivial certified neighborhood with a strictly positive lower bound on the local stability radius. \emph{Third,} Theorem~\ref{thm-pos} gives a probabilistic coverage interpretation of these local regions over the prescribed compact error-state domain under a sampling distribution with full support. The analysis characterizes Stage~I geometry. Finite-time optimization and sample-complexity guarantees for training success remain open. Finally, we connect the point-guided loss with the LMI certificate and explain why Stage~I can provide a useful initialization for Stage~II in practice.

\subsubsection{Point-Guided Lyapunov Optimization}

According to Lemma~\ref{thm-lyapunov}, if $F_V(\eta,\theta) \le 0$ holds for all $\eta$ in the considered domain, then the time derivative of $V(\eta)$ satisfies $\dot{V} \le -\rho V$, establishing the desired Lyapunov stability of the error system. The point-guided loss in Eq.~(\ref{eq:point-loss}) enforces a strict version of this condition at sampled points.

\begin{lemma}\label{lem-local}
	Let $\theta^*$ be any parameter such that the data-dependent term of $\mathcal{L}_{\mathrm{point}}$ vanishes, i.e., $\frac{1}{N} \sum_{i=1}^N \big[ F_V(\eta_i, \theta^*) + \xi \big]_+ = 0$. Then, for every sampled point $\eta_i \in \mathcal{D}$, the Lyapunov decrease condition is strictly satisfied: $F_V(\eta_i, \theta^*) \le -\xi < 0$.
\end{lemma}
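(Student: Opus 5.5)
The argument is elementary and rests only on the nonnegativity of the ReLU operation $[\cdot]_+ = \max(0,\cdot)$ used in Eq.~(\ref{eq:point-loss}). I would proceed in two steps. First, I would show that a vanishing average of nonnegative terms forces every term to vanish. Second, I would unpack what $[F_V(\eta_i,\theta^*)+\xi]_+ = 0$ means for each individual sample.

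For the first step, set $a_i \triangleq [F_V(\eta_i,\theta^*)+\xi]_+$ for $i=1,\dots,N$. By definition of $[\cdot]_+$, each $a_i \ge 0$. The hypothesis gives $\frac{1}{N}\sum_{i=1}^N a_i = 0$, so $\sum_{i=1}^N a_i = 0$. Suppose some $a_j > 0$. Then $\sum_i a_i \ge a_j > 0$, which is a contradiction. Hence $a_i = 0$ for every $i$.

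For the second step, I would use that $\max(0,z)=0$ holds if and only if $z \le 0$. Applied to $z = F_V(\eta_i,\theta^*)+\xi$, this gives $F_V(\eta_i,\theta^*) \le -\xi$ for each sampled $\eta_i \in \mathcal{D}$. Since the margin satisfies $\xi>0$ by assumption, we get $-\xi<0$, which is exactly the claimed strict decrease $F_V(\eta_i,\theta^*) \le -\xi < 0$.

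I would finish with a remark connecting this to Lemma~\ref{thm-lyapunov}. Because $\dot V + \rho V \le F_V$, the bound yields $\dot V(\eta_i) \le -\rho V(\eta_i) - \xi$ at each sample. This is pointwise only, and it is the input to the neighborhood extension in Theorem~\ref{thm-local_radius}. There is no genuine obstacle in this lemma. The only point requiring care is to note that the regularizer $\gamma\|\theta\|^2$ plays no role, because the hypothesis concerns only the data-dependent term.
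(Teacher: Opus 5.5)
Your proposal is correct and follows essentially the same route as the paper's proof: both reduce the vanishing average to each ReLU term $[F_V(\eta_i,\theta^*)+\xi]_+$ being zero and then read off $F_V(\eta_i,\theta^*)\le -\xi<0$. You also spell out why a zero average of nonnegative terms forces every term to vanish, which the paper leaves implicit.
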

\begin{proof}
    The proof is provided in Appendix~\ref{prf-lem-local}.
\end{proof}

\subsubsection{Local Stability Radius and Probabilistic Coverage}

Lemma~\ref{lem-local} establishes Lyapunov decrease at sampled points. Since neural networks define continuous mappings, this property extends to a neighborhood under mild regularity conditions.

\begin{assumption}[Lipschitz Continuity]\label{asp-lip}
	The neural network $\nu(\eta, \theta)$ is Lipschitz continuous with respect to $\eta$ on the compact domain $\mathcal{X}$, with Lipschitz constant $L_\nu$, i.e., $\|\nu(\eta_1) - \nu(\eta_2)\| \le L_\nu \|\eta_1 - \eta_2\|$ for all $\eta_1,\eta_2 \in \mathcal{X}$.
\end{assumption}

\begin{remark}
	Assumption~\ref{asp-lip} is used in the analysis of nonlinear observers and learning-based controllers. See, e.g., \cite{khalil2002nonlinear, chen2023neural, Fazlyab2019Lipschitz, pmlr-v242-zhang24a}. Our analysis uses global Lipschitz continuity of the observer nonlinearity on $\mathcal{X}$, with no Lipschitz smoothness requirement. This regularity condition applies to a broad class of practical observer nonlinearities.
\end{remark}

Under Assumption~\ref{asp-lip}, we obtain the following explicit local stability radius.

\begin{theorem}\label{thm-local_radius}
	Let $Q = A_\epsilon^\top P + P A_\epsilon + \rho P$ and fix the trained parameter $\theta$. Under Assumption~\ref{asp-lip}, suppose the Lyapunov condition is strictly satisfied at a sampled point $\eta_s$ such that $F_V(\eta_s,\theta) \le -\xi < 0$. Then there exists a continuous stability neighborhood $\mathcal{B}(\eta_s, r_s) \cap \mathcal{X}$ within which $F_V(\eta,\theta) \le 0$. When $C_2>0$, the radius $r_s$ can be chosen as the positive root of the quadratic equation:
	\begin{equation}
		C_2 r_s^2 + C_1(\eta_s) r_s + F_V(\eta_s,\theta) = 0,
	\end{equation}
	where $C_1(\eta_s)$ and $C_2$ are nonnegative coefficients whose explicit expressions, including the degenerate case $C_2=0$, are provided in Appendix~\ref{prf-local_radius}.
\end{theorem}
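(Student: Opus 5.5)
We perturb around the sample: write $\eta = \eta_s + \delta$ with $\|\delta\| \le r$ and $\eta \in \mathcal{X}$, and bound the increment $F_V(\eta,\theta) - F_V(\eta_s,\theta)$ by a quadratic polynomial in $r$ with nonnegative coefficients. The three terms of $F_V$ are handled separately.

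\emph{Quadratic term.} We have
\[
\eta^\top Q\eta - \eta_s^\top Q\eta_s = 2\eta_s^\top Q\delta + \delta^\top Q\delta,
\]
so this part is at most $2\|Q\eta_s\| r + \|Q\| r^2$. Here $\|Q\|$ is the spectral norm and $Q$ is symmetric.

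\emph{Network term.} We use the split
\[
\nu(\eta)^\top P\eta - \nu(\eta_s)^\top P\eta_s = (\nu(\eta)-\nu(\eta_s))^\top P\eta_s + (\nu(\eta)-\nu(\eta_s))^\top P\delta + \nu(\eta_s)^\top P\delta.
\]
By Assumption~\ref{asp-lip} both $\eta$ and $\eta_s$ lie in $\mathcal{X}$, so $\|\nu(\eta)-\nu(\eta_s)\| \le L_\nu r$. Hence
\[
\bigl|-2\nu(\eta)^\top P\eta + 2\nu(\eta_s)^\top P\eta_s\bigr| \le 2\bigl(L_\nu\|P\eta_s\| + \|P\nu(\eta_s)\|\bigr) r + 2L_\nu\|P\| r^2 .
\]

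\emph{Disturbance term.} The reverse triangle inequality gives
\[
2\epsilon M_w\bigl(\|P\eta\| - \|P\eta_s\|\bigr) \le 2\epsilon M_w\|P\| r .
\]

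Collecting the three bounds, we define
\[
C_1(\eta_s) = 2\|Q\eta_s\| + 2L_\nu\|P\eta_s\| + 2\|P\nu(\eta_s,\theta)\| + 2\epsilon M_w\|P\|, \qquad C_2 = \|Q\| + 2L_\nu\|P\|.
\]
Both coefficients are nonnegative. For every $\eta \in \mathcal{B}(\eta_s,r)\cap\mathcal{X}$ this yields
\[
F_V(\eta,\theta) \le F_V(\eta_s,\theta) + C_1(\eta_s) r + C_2 r^2 .
\]
The choice of splitting is where care is needed. Expanding $\nu$ around $\eta_s$ leaves a product of two increments, and the cross term $(\nu(\eta)-\nu(\eta_s))^\top P\delta$ is exactly what produces the $L_\nu\|P\|r^2$ contribution. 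Keeping every coefficient nonnegative and depending only on $\eta_s$ is the point that makes the root argument below work.

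Let $p(r) = C_2 r^2 + C_1 r + F_V(\eta_s,\theta)$. When $C_2>0$, we have $p(0) = F_V(\eta_s,\theta) \le -\xi < 0$, and $p$ is an upward parabola. It therefore has exactly one positive root $r_s$, given explicitly by
\[
r_s = \frac{-C_1 + \sqrt{C_1^2 - 4C_2F_V(\eta_s,\theta)}}{2C_2} > 0,
\]
and $p(r) \le 0$ on $[0,r_s]$. Since $p$ is increasing in $r$ for $r \ge 0$ when $C_1,C_2\ge 0$, every $\eta$ with $\|\eta-\eta_s\| \le r_s$ and $\eta \in \mathcal{X}$ satisfies $F_V(\eta,\theta) \le p(\|\eta-\eta_s\|) \le p(r_s) = 0$. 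This is the claimed neighborhood.

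In the degenerate case $C_2 = 0$, which forces $Q = O$ and $L_\nu\|P\| = 0$, the polynomial is linear. We take $r_s = -F_V(\eta_s,\theta)/C_1(\eta_s)$ when $C_1>0$, and $r_s = \infty$ (the whole of $\mathcal{X}$) when $C_1 = 0$. Finally, intersecting with $\mathcal{X}$ is necessary because the Lipschitz bound is only available there.
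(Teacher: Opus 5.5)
Your proposal is correct and matches the paper's proof essentially step for step: the same perturbation $\eta=\eta_s+\zeta$, the same three-term split of the network contribution (including the cross term $(\nu(\eta,\theta)-\nu(\eta_s,\theta))^\top P\zeta$ that produces the $2L_\nu\|P\|$ part of $C_2$), identical constants $C_1(\eta_s)$ and $C_2$, and the same positive-root argument with the same $C_2=0$, $C_1>0$ choice $r_s=-F_V(\eta_s,\theta)/C_1(\eta_s)$. The only cosmetic difference is the fully degenerate case $C_1=C_2=0$, where the paper takes an arbitrary finite positive radius instead of your $r_s=\infty$.
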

\begin{proof}
	The proof is provided in Appendix~\ref{prf-local_radius}.
\end{proof}

Moreover, these radii admit a strictly positive uniform lower bound.

\begin{corollary}\label{cor-rad}
	Assume $F_V(\eta_s,\theta) \le -\xi < 0$ for all sampled points $\eta_s \in \mathcal{D} \subset \mathcal{X}$ and Assumption~\ref{asp-lip} holds. Then there exists a constant $\underline{r} > 0$, independent of $\eta_s$, such that $r_{\min} \coloneqq \min_{1 \le s \le N} r_s \ge \underline{r} > 0$.
\end{corollary}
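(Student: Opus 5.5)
We will make the radius from Theorem~\ref{thm-local_radius} explicit and bound it from below uniformly in the sample. The natural route is to derive, as in the proof of that theorem, a perturbation bound for $F_V$ around $\eta_s$. Write $\eta = \eta_s + \delta$ with $\|\delta\| \le r$, and expand each term of $F_V$.

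\textbf{Perturbation bound.} The quadratic term gives
\begin{equation}
\eta^\top Q\eta - \eta_s^\top Q\eta_s = 2\eta_s^\top Q\delta + \delta^\top Q\delta .
\end{equation}
For the cross term, write
\begin{equation}
\nu(\eta)^\top P\eta - \nu(\eta_s)^\top P\eta_s = (\nu(\eta)-\nu(\eta_s))^\top P\eta_s + \nu(\eta)^\top P\delta,
\end{equation}
and bound it using Assumption~\ref{asp-lip} together with $\|\nu(\eta)\| \le \|\nu(\eta_s)\| + L_\nu r$. The norm term changes by at most $2\epsilon M_w\|P\|\,r$ by the triangle inequality. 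Combining these yields
\begin{equation}
F_V(\eta,\theta) \le F_V(\eta_s,\theta) + C_1(\eta_s) r + C_2 r^2,
\end{equation}
with
\begin{equation}
C_2 = \|Q\| + 2L_\nu\|P\|, \qquad C_1(\eta_s) = 2\|Q\|\|\eta_s\| + 2L_\nu\|P\|\|\eta_s\| + 2\|P\|\|\nu(\eta_s,\theta)\| + 2\epsilon M_w\|P\|.
\end{equation}
These should agree with the appendix coefficients up to harmless constants; the only property we use is that $C_2$ is independent of $\eta_s$ and $C_1$ depends on $\eta_s$ only through $\|\eta_s\|$ and $\|\nu(\eta_s,\theta)\|$.

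\textbf{Uniform bound on $C_1$.} Since $\mathcal{X}$ is compact, $R_\mathcal{X} := \max_{\eta\in\mathcal{X}}\|\eta\| < \infty$. Fix any reference point $\eta_0 \in \mathcal{X}$. Lipschitz continuity then gives
\begin{equation}
\|\nu(\eta_s,\theta)\| \le \|\nu(\eta_0,\theta)\| + 2L_\nu R_\mathcal{X} =: N_\nu .
\end{equation}
Alternatively, continuity of $\nu$ on the compact set $\mathcal{X}$ bounds $\|\nu\|$ directly. Hence
\begin{equation}
C_1(\eta_s) \le \overline{C}_1 := 2(\|Q\| + L_\nu\|P\|)R_\mathcal{X} + 2\|P\|N_\nu + 2\epsilon M_w\|P\|,
\end{equation}
which is independent of $s$.

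\textbf{Monotonicity of the root.} When $C_2 > 0$, the positive root is
\begin{equation}
r_s = \frac{2|F_V(\eta_s)|}{C_1 + \sqrt{C_1^2 + 4C_2|F_V(\eta_s)|}},
\end{equation}
written in rationalized form. This expression is increasing in $|F_V(\eta_s)|$ and decreasing in $C_1$. Using $|F_V(\eta_s)| \ge \xi$ and $C_1(\eta_s) \le \overline{C}_1$, we obtain
\begin{equation}
r_s \ge \underline{r} := \frac{2\xi}{\overline{C}_1 + \sqrt{\overline{C}_1^2 + 4C_2\xi}} > 0 .
\end{equation}
In the degenerate case $C_2 = 0$, the radius is $r_s = |F_V|/C_1 \ge \xi/\overline{C}_1$, or unbounded if $C_1 = 0$, so again a positive uniform bound holds. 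Taking the minimum over the finitely many samples then gives $r_{\min} \ge \underline{r}$.

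\textbf{Main obstacle.} The delicate step is the uniform control of $\|\nu(\eta_s,\theta)\|$ inside $C_1$. It must rest on compactness of $\mathcal{X}$ and the fixed trained $\theta$, not on the sample, so that $\underline{r}$ is genuinely independent of $\eta_s$ and of $N$. Verifying the monotonicity of the rationalized root formula in both arguments is routine by comparison.
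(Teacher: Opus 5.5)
Your proposal is correct and follows essentially the same argument as the paper. Both bound $C_1(\eta_s)\le\bar C_1$ uniformly using compactness of $\mathcal{X}$ and the Lipschitz bound on $\nu$, use that the positive root is increasing in $-F_V(\eta_s,\theta)$ and decreasing in $C_1$, and treat the $C_2=0$ cases separately. The differences are cosmetic: you get monotonicity from the rationalized root $2|F_V|/(C_1+\sqrt{C_1^2+4C_2|F_V|})$ rather than the paper's sign check of $g'(c)$, and your pointwise $C_1$ uses $\|Q\|\|\eta_s\|$ in place of $\|Q\eta_s\|$, which is slightly coarser but gives the same $\bar C_1$. One small fix: the Lipschitz step in your perturbation bound should be stated only for $\eta\in\mathcal{X}$.
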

\begin{proof}
	The explicit expression of $\underline{r}$ and its derivation are provided in Appendix~\ref{prf-rad}.
\end{proof}

These results allow us to interpret the domain-level effect of point-wise training. We call a sampled point $\eta_i$ \emph{successful} if the trained parameter $\theta$ satisfies the strict sampled Lyapunov condition $F_V(\eta_i,\theta) \le -\xi$. Let $U_T = \bigcup_{i=1}^T \big(\mathcal{B}(\eta_i, r_i)\cap\mathcal{X}\big)$ denote the union of the certified local stable regions generated by $T$ successful samples inside the prescribed domain.

\begin{theorem}\label{thm-pos}
	Let $\mathcal{X} \subset \mathbb{R}^{n_{\eta}}$ be a compact domain in the error-state space of $\eta$, where $n_{\eta} = \dim(\eta)$. Let $\mu$ be a probability measure supported on $\mathcal{X}$ such that, for every $z \in \mathcal{X}$ and every $\varrho>0$, $\mu(\mathcal{B}(z,\varrho)\cap\mathcal{X})>0$. Assume $\eta_1,\dots,\eta_T$ are independent successful samples drawn from $\mu$, and each sample admits a certified neighborhood radius $r_i \ge r_{\min} > 0$. Then there exist constants $N_{\mathrm{cov}} < \infty$ and $q>0$, depending only on $\mathcal{X}$, $\mu$, and $r_{\min}$, such that
	\begin{equation}
		\mathbb{P}(\mathcal{X} \subset U_T) \ge 1 - N_{\mathrm{cov}}(1-q)^T.
	\end{equation}
	In particular, as the number of successful samples $T$ approaches infinity, the probability that the certified stable region $U_T$ completely covers the error-state domain $\mathcal{X}$ approaches $1$:
	\begin{equation}
		\lim_{T \to \infty} \mathbb{P}(\mathcal{X} \subset U_T) = 1.
	\end{equation}
\end{theorem}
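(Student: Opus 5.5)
The plan is a standard covering-number argument combined with a union bound. Set $\delta = r_{\min}/2$. Since $\mathcal{X}$ is compact, the open balls $\{\mathcal{B}(z,\delta): z\in\mathcal{X}\}$ cover $\mathcal{X}$, so there is a finite subcover with centers $z_1,\dots,z_{N_{\mathrm{cov}}}\in\mathcal{X}$. Here $N_{\mathrm{cov}}<\infty$ depends only on $\mathcal{X}$ and $r_{\min}$. Define
\[
q \triangleq \min_{1\le j\le N_{\mathrm{cov}}} \mu\big(\mathcal{B}(z_j,\delta)\cap\mathcal{X}\big).
\]
Each term in the minimum is positive by the full-support hypothesis on $\mu$, and the minimum is over finitely many terms, so $q>0$. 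It depends only on $\mathcal{X}$, $\mu$ and $r_{\min}$ through the chosen cover. We may also note $q\le 1$, so that $(1-q)^T$ is well defined and nonnegative.

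\textbf{Geometric step.} Fix $j$ and suppose some sample $\eta_i$ lies in $\mathcal{B}(z_j,\delta)$. Then $\mathcal{B}(z_j,\delta)\subset \mathcal{B}(\eta_i,2\delta)=\mathcal{B}(\eta_i,r_{\min})\subset\mathcal{B}(\eta_i,r_i)$ by the triangle inequality and $r_i\ge r_{\min}$. Hence $\mathcal{B}(z_j,\delta)\cap\mathcal{X}\subset U_T$. It follows that if every cell $\mathcal{B}(z_j,\delta)$ contains at least one sample, then $\mathcal{X}=\bigcup_j \mathcal{B}(z_j,\delta)\cap\mathcal{X}\subset U_T$. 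Contrapositively, the event $\{\mathcal{X}\not\subset U_T\}$ is contained in $\bigcup_j E_j$, where $E_j$ is the event that no $\eta_i$ falls in $\mathcal{B}(z_j,\delta)$. (If boundary conventions for open versus closed balls matter, I will shrink $\delta$ slightly, e.g.\ take $\delta=r_{\min}/3$; this changes nothing below.)

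\textbf{Probabilistic step.} By independence and the common law $\mu$,
\[
\mathbb{P}(E_j)=\big(1-\mu(\mathcal{B}(z_j,\delta)\cap\mathcal{X})\big)^T\le(1-q)^T.
\]
A union bound over the $N_{\mathrm{cov}}$ cells then gives $\mathbb{P}(\mathcal{X}\not\subset U_T)\le N_{\mathrm{cov}}(1-q)^T$, which is the claimed inequality. If $q=1$ the bound is trivially $0$. If $q<1$, then $(1-q)^T\to 0$ geometrically as $T\to\infty$, and the limit statement follows since probabilities are bounded by $1$.

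\textbf{Main obstacle.} The only delicate point is interpretive rather than technical. The samples are assumed to be i.i.d.\ from $\mu$ \emph{and} successful with radii at least $r_{\min}$. Conditioning on success could in principle distort the distribution. I will read the hypothesis as stated, namely that the successful samples themselves are i.i.d.\ $\mu$ with a uniform radius lower bound $r_{\min}$, as furnished by Corollary~\ref{cor-rad}. Under that reading the argument above goes through without further measurability concerns. Measurability of $\{\mathcal{X}\subset U_T\}$ is not an issue either, since we only bound it through the events $E_j$, which are measurable.
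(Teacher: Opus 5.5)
Your proposal is correct and follows essentially the same route as the paper: an $r_{\min}/2$-net of $\mathcal{X}$ obtained from compactness, $q$ defined as the minimal $\mu$-mass of the net balls, a union bound over the $N_{\mathrm{cov}}$ events that a net ball receives no sample, and the triangle inequality to place every point of $\mathcal{X}$ within $r_{\min}$ of some successful sample. Your caveat about conditioning on success is fair, and the paper leaves it implicit as well: it also simply assumes that the successful samples are themselves i.i.d.\ from $\mu$.
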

\begin{proof}
	The proof is provided in Appendix~\ref{prf-pos}.
\end{proof}
A finite-sample consequence of Theorem~\ref{thm-pos} is provided in Corollary~\ref{cor-finite} in Appendix~\ref{prf-pos}.

\begin{remark}
	Theorems~\ref{thm-local_radius} and \ref{thm-pos} jointly show that enforcing a strict Lyapunov decrease at finitely many sampled error states induces nontrivial local stable neighborhoods around each sample and yields a probabilistic coverage interpretation over the prescribed compact error-state domain used in Stage~I. This can be viewed as a sample-driven expansion of a \emph{trusted} stable region for pre-training. Related learning-based control studies, e.g., \cite{chang2019neural, zhou2022unknown, pmlr-v242-zhang24a}, typically use stronger regularity assumptions such as Lipschitz smoothness. Our analysis derives explicit local radii and a finite-sample coverage bound using Lipschitz continuity of the observer nonlinearity and a full-support sampling assumption. This formulation leads to simpler and more practical regularity conditions. More broadly, this paradigm can also be extended to other stability-certified learning-for-control problems, such as neural network controller synthesis, e.g. \cite{gu2022recurrent, wu2023discrete, pmlr-v235-yang24f}. Appendix~\ref{apd-controller-extension} gives one example with recurrent neural network controllers.
\end{remark}

\begin{remark}
	Let $\Theta_{\mathrm{LMI}} = \{\theta: \mathcal{H}(\theta) \prec O\}$ and $\Theta_{\mathrm{point}} = \{\theta: F_V(\eta,\theta) < 0,\ \forall \eta \in \mathcal{X}\}$. Under the assumptions and certificate variables of Lemma~\ref{thm-LMI}, any $\theta \in \Theta_{\mathrm{LMI}}$ also belongs to $\Theta_{\mathrm{point}}$, so $\Theta_{\mathrm{LMI}} \subseteq \Theta_{\mathrm{point}}$. In practice, Stage~I provides a stability-oriented initialization by promoting pointwise Lyapunov decrease over sampled error states, which can move the parameters closer to the certified set. The global LMI is imposed during Stage~II, which starts from $\theta_{\mathrm{pre}}$ and empirically resolves the remaining LMI violations. Compared with random initialization, this warm start can reduce the practical cost of the final fine-tuning step.
\end{remark}

\section{Experiments}\label{sec-experiments}

In this section, we evaluate the proposed framework against representative observer and learning-based control baselines on two nonlinear benchmarks: a Quad-UAV under aerodynamic ground effect and an AUV navigating through dynamic fluid vortices. These experiments emphasize comparison with existing methods and the resulting gains in tracking and disturbance rejection. We provide the X-29 aircraft training-strategy ablations in Section~\ref{sec-ablation} and capacity and robustness results in Appendix~\ref{sec-exp1}. All experiments were conducted on our computing server. See Appendix~\ref{apd-expo} for details.

\subsection{X-29 Training Efficiency and Ablation Study}\label{sec-ablation}

We use the X-29 aircraft benchmark \cite{1992ntrs.rept09932B} to study training efficiency and the contribution of each training stage. The Large NN has 576 hidden neurons. The system matrices, observer architectures, and training settings are given in Appendix~\ref{apd-exp1}. Additional capacity and robustness results are reported in Appendix~\ref{sec-exp1}.

Table~\ref{tbl-time_use} reports the computational cost of enforcing LMI-based stability. Direct LMI solvers work only for tiny networks and become infeasible as capacity grows. For the Large NN, direct solving fails, while pure LMI gradient descent takes $2210.39$ seconds. The proposed two-stage method reduces this to $895.31$ seconds, giving a $2.5\times$ speedup while still satisfying the global LMI certificate.

\begin{table}[t]
	\centering
	\caption{\textbf{Computational efficiency and scalability of the two-stage training framework.} Comparison of solving or training time across observer capacities and optimization methods.}
	\label{tbl-time_use}
	\setlength{\tabcolsep}{4.5pt}
	\footnotesize
		\begin{tabular}{>{\bfseries}lclc}
			\toprule
			\textbf{Architecture}                 & \textbf{Total Neurons} & \textbf{Solving/Training Method}    & \textbf{Time (s)} \\
			\midrule
			Linear                                & 0                      & Pole Placement                      & <1                \\
			Tiny NN                               & 9                      & Direct LMI Solver                   & <1                \\
			Small NN                              & 128                    & Direct LMI Solver                   & Infeasible        \\
			Large NN                              & 576                    & Direct LMI Solver                   & Failed            \\
			Large NN                              & 576                    & LMI Gradient Descent (Only)         & 2210.39           \\
			\rowcolor{green!20} \textbf{Large NN} & \textbf{576}           & \textbf{Point \& LMI Tuning (Ours)} & \textbf{895.31}   \\
			\bottomrule
		\end{tabular}
\end{table}

\begin{table}[t]
	\centering
	\caption{\textbf{Ablation study on training strategy for the Large NN observer.} Comparison of training time and mean squared estimation error (MSE) for point-guided pre-training, pure LMI optimization, and the proposed two-stage Point \& LMI Tuning method.}
	\label{tbl-ablation}
	\setlength{\tabcolsep}{5pt}
	\footnotesize
	\begin{tabular}{>{\bfseries}lcc}
		\toprule
		\textbf{Training Method}                                & \textbf{Time (s)} & \textbf{MSE (m)} \\
		\midrule
		Point-Guided (Only)                                     & 99.53             & 1.0645           \\
		LMI Gradient Descent (Only)                             & 2210.39           & 0.0707           \\
		\rowcolor{green!20} \textbf{Point \& LMI Tuning (Ours)} & \textbf{895.31}   & \textbf{0.0499}  \\
		\bottomrule
	\end{tabular}
\end{table}

\paragraph{Ablation Study: Effect of Point-Guided Pre-Training and LMI Fine-Tuning.}

We compare three training strategies for the Large NN: \emph{Point-Only}, \emph{LMI-Only}, and the proposed two-stage scheme.

\begin{figure}[t]
	\centering
	\includegraphics[width=0.8\linewidth]{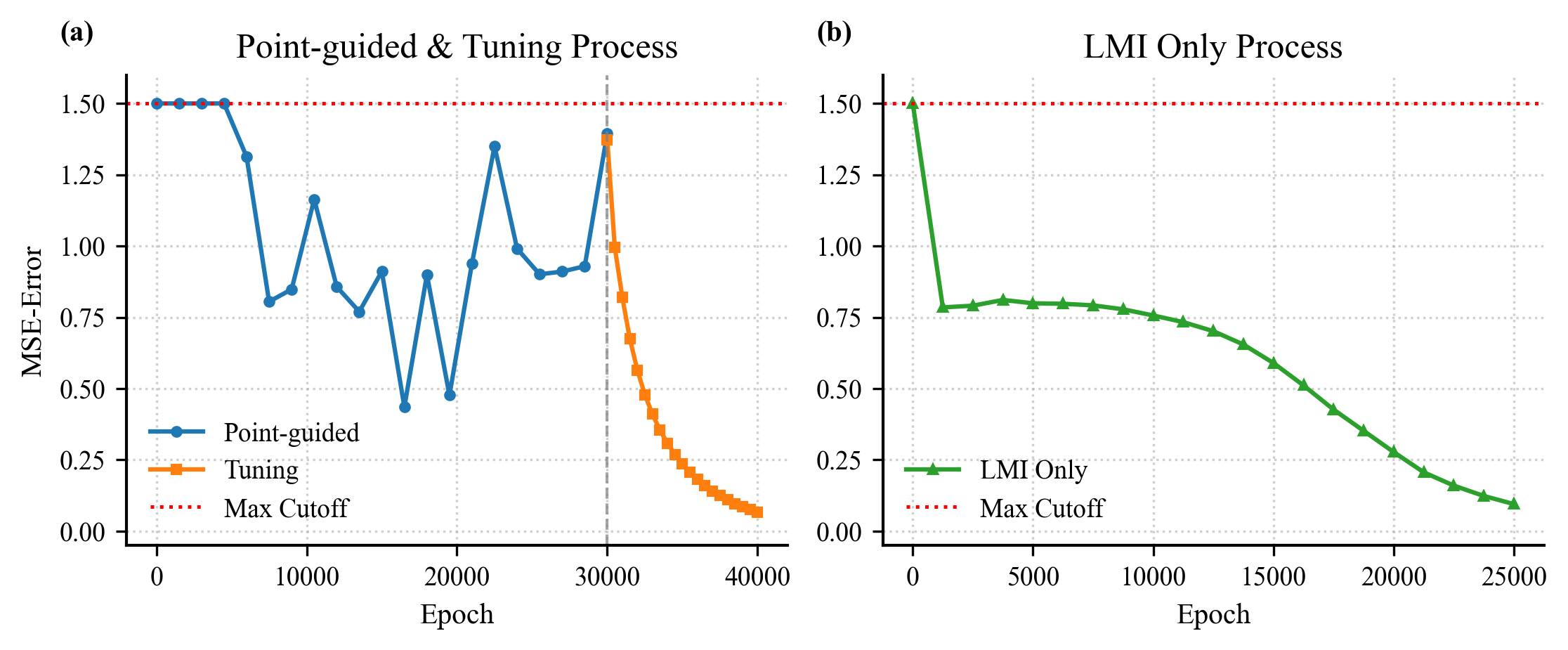}
	\caption{\textbf{Ablation on training strategy for the Large NN observer.} MSE under \emph{Point-Only}, \emph{LMI-Only}, and the proposed two-stage scheme.}
	\label{fig-MSE_CMP}
\end{figure}

Figure~\ref{fig-MSE_CMP} and Table~\ref{tbl-ablation} show that Point-Only has the largest final MSE and leaves LMI certification to Stage~II. Its pre-trained parameters provide a favorable initialization that substantially accelerates subsequent LMI fine-tuning. The two-stage method therefore achieves the best final MSE while reducing training time from $2210.39$ seconds to $895.31$ seconds.

\subsection{Quad-UAV under Ground Effect}\label{sec-exp2}

We next evaluate the method on a Quad-UAV subject to aerodynamic ground effect, a nonlinear setting with strong state-dependent disturbances.

The 12-dimensional state is $x = [p, v, \Omega, \omega]^\top \in \mathbb{R}^{12}$, and the nonlinear dynamics with disturbance $f_a$ are
\begin{equation}
	\begin{aligned}
		\dot p         & = v,                  & \qquad m\dot v      & = m\vec{g}+R(f_u+f_a),             \\
		\dot R(\Omega) & = R(\Omega)S(\omega), & \qquad J\dot \omega & = J\omega\times\omega+\tau_\omega,
	\end{aligned}
\end{equation}
where $f_u$ and $\tau_\omega$ denote thrust and body torques, and $S(\cdot)$ denotes the skew-symmetric matrix operator. Near the ground, rotor downwash generates a cushioning force known as ground effect, modeled as 
\begin{equation}
    f_G = \frac{g_2 R(z)}{h^2 + g_1} T_{\mathrm{cmd}},
\end{equation}
where $h$ is the altitude, $R(z)$ is the $z$-axis component of the rotation matrix, $T_{\mathrm{cmd}}$ is the nominal thrust command, and $g_1, g_2$ are aerodynamic constants \cite{yang2025ground}. This term introduces strong nonlinear disturbances that degrade landing and take-off performance.

We compare three controllers: a \emph{Basic Proportional-Integral-Derivative (PID)} baseline, \emph{PID + Neural Lander} \cite{shi2019neural}, and \emph{PID + Neural Network Observer} (ours). Neural Lander learns a task-specific feedforward compensator from 40,000 data points, whereas our method estimates $f_a$ online and modifies the control thrust as $T_{\mathrm{cmd}}' = T_{\mathrm{cmd}} - \hat{f}_a$. Training details are provided in Appendix~\ref{apd-exp2}.

\begin{figure}[t]
	\centering
	\includegraphics[width=0.8\linewidth]{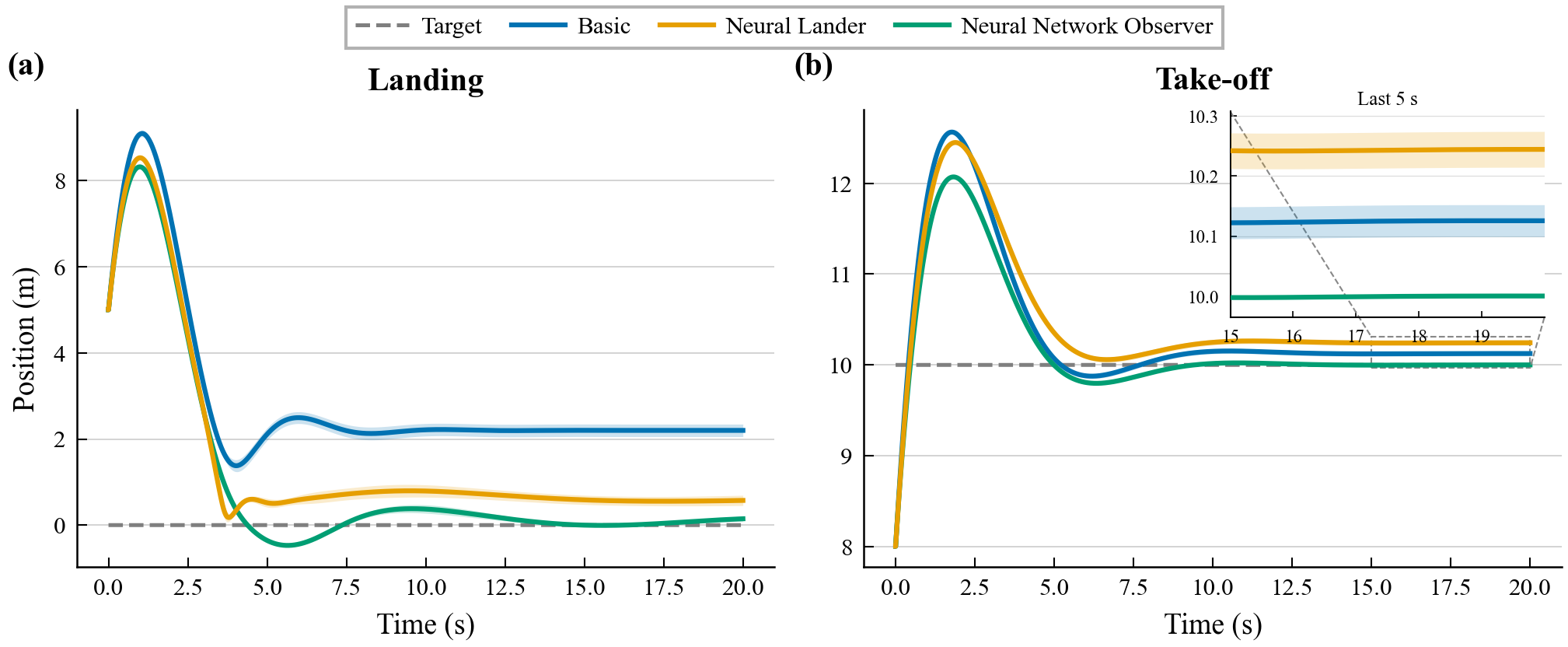}
	\caption{\textbf{Generalization performance on a Quad-UAV under ground effect.} Vertical trajectory tracking for landing (a) and take-off (b). Shaded regions denote the mean $\pm$ standard deviation (SD).}
	\label{fig-quad}
\end{figure}

\begin{table}
	\centering
	\caption{\textbf{Comparative analysis of tracking errors and data requirements for Quad-UAV control.} Here, ``per task'' means that once the environment or disturbance changes, new data must be collected and the model must be retrained. The label ``once'' means that the same trained parameters can be reused across changes in the environment as long as the underlying system is unchanged.}
	\label{tbl-quad}
	\setlength{\tabcolsep}{3pt}
	\footnotesize
	\begin{tabular}{>{\bfseries}lcccc}
		\toprule
		\textbf{Method}                                                   & \textbf{Data Required} & \textbf{Training Time}   & \textbf{Landing (m)}         & \textbf{Take-off (m)}        \\
		\midrule
		Basic PID                                                         & $\times$           & ---                      & $2.2006 \pm 0.1456$          & $0.1255 \pm 0.0266$          \\
		PID + Neural Lander (\cite{shi2019neural})                                              & \checkmark         & 88.2\,s (per task)       & $0.5732 \pm 0.1188$          & $0.2444 \pm 0.0297$          \\
		\rowcolor{green!20} \textbf{PID + Neural Network Observer (Ours)} & \boldmath $\times$ & \textbf{233.8\,s (once)} & $\mathbf{0.1471 \pm 0.0602}$ & $\mathbf{0.0002 \pm 0.0000}$ \\
		\bottomrule
	\end{tabular}
	\vspace{-1.1em}
\end{table}

Figure~\ref{fig-quad} and Table~\ref{tbl-quad} report vertical tracking performance for both landing and take-off. In landing, Basic PID incurs a large error of $2.2006$ m, while Neural Lander and our observer reduce it to $0.5732$ m and $0.1471$ m, respectively.

The take-off results further highlight generalization. Neural Lander degrades to $0.2444$ m error, worse than the Basic PID baseline ($0.1255$ m), whereas the proposed observer achieves near-perfect tracking with $0.0002$ m error after a single offline training stage. This suggests that the observer-based design generalizes across tasks more reliably than the task-specific baseline.

\subsection{Disturbance Rejection in Fluid Environment}\label{sec-exp3}

We further investigate the neural network observer in a highly complex and time-varying disturbance field, where an AUV navigates through a dynamic fluid environment. This scenario tests the ability of the method to handle strong, spatially and temporally varying disturbances arising from fluid-structure interactions.

The AUV is modeled as a non-holonomic agent confined to a 2D plane with state $x = [p_x, p_y, \psi, v, \omega]^\top$, representing 2D position, heading angle, linear velocity, and angular velocity. The dynamics are
\begin{equation}
	\begin{aligned}
		\dot{p}_x &= v\cos(\psi), \quad \dot{p}_y = v\sin(\psi), \quad \dot{\psi} = \omega, \\
		\dot{v}   &= \frac{F+f_a}{m}, \quad \dot{\omega} = \frac{\tau+\tau_a}{J},
	\end{aligned}
\end{equation}
where $u = [F, \tau]^\top$ is the control input, $m$ and $J$ are the mass and moment of inertia, and $f_a, \tau_a$ denote unknown fluid-induced forces and torques. The goal is to track a prescribed reference trajectory while rejecting these disturbances.

The surrounding fluid is simulated using the WaterLily engine on a $384 \times 192$ grid. The flow has an upstream velocity $U = 10.0$ and Reynolds number $Re = 200$. A fixed cylindrical obstacle of diameter $D_1 = 20$ is placed at $(144, 96)$, generating a von K\'arm\'an vortex street in its wake. The AUV, modeled as a $16 \times 8$ rectangular body, starts at $(128, 96)$ and must traverse the vortex street to reach a target at $(200, 130)$, experiencing strong, unsteady hydrodynamic forces along the way.

We compare three control configurations: a \emph{Basic Nonlinear Model Predictive Control (NMPC)} controller without explicit disturbance compensation, an \emph{ESO + NMPC} based on scheme \cite{guo2011ESO}, and the proposed \emph{Neural Network Observer + NMPC} framework. The NMPC implementation uses an interior-point optimizer \cite{wachter2006implementation}. All NMPC details and observer parameters are provided in Appendix~\ref{apd-exp3}. Performance is evaluated via the overall trajectory tracking error (mean $\pm$ SD) over multiple runs with randomized initial conditions or disturbance realizations.

\begin{figure}[t]
	\centering
	\includegraphics[width=0.8\linewidth]{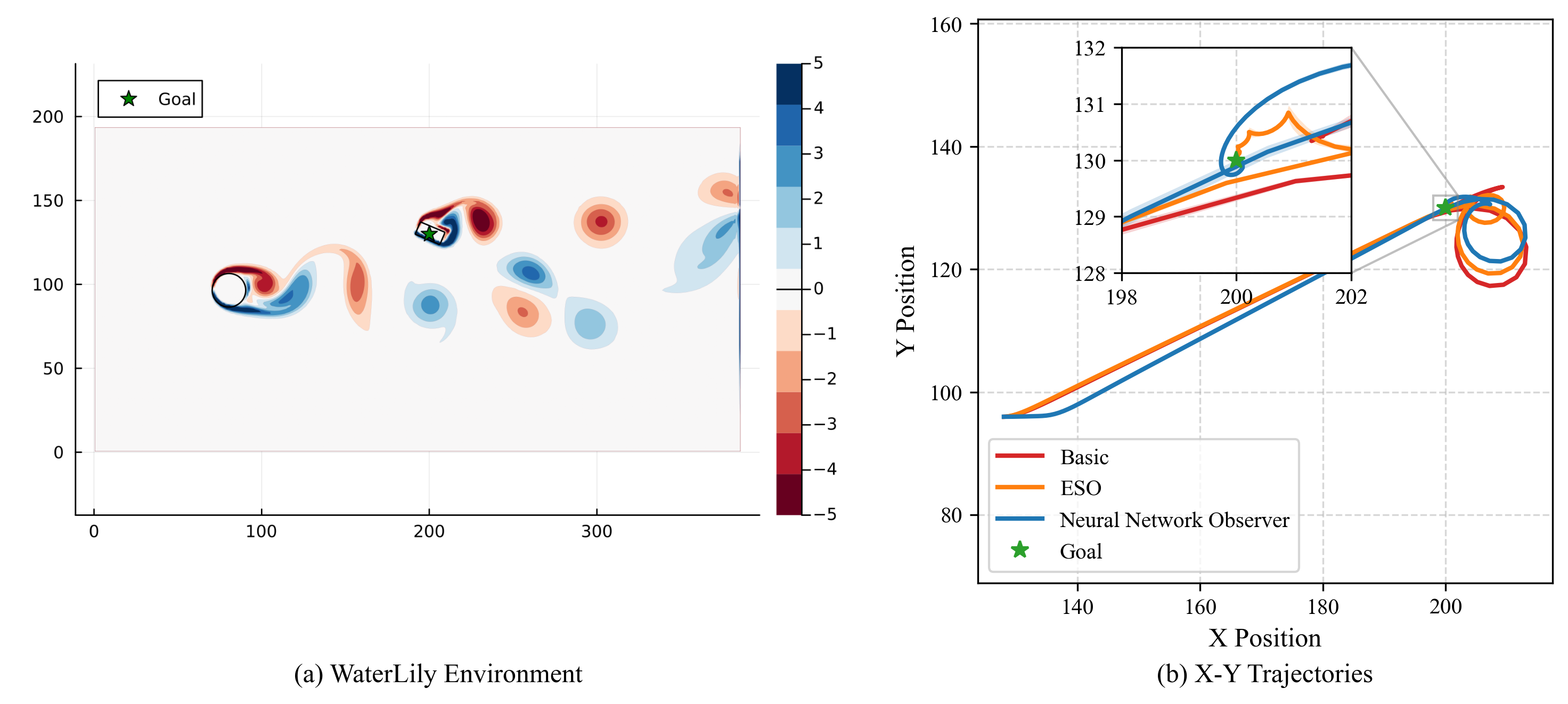}
	\caption{\textbf{AUV navigation in a simulated von K\'arm\'an vortex street.} (a) The WaterLily fluid environment showing pressure fields and the AUV's path. (b) Trajectory tracking results for different controllers. Shaded regions denote the mean $\pm$ SD.}
	\label{fig-AUV}
	\vspace{-0.2em}
\end{figure}

\begin{table}
	\centering
	\caption{\textbf{Disturbance rejection performance in dynamic fluid vortices.} Tracking error of different control configurations in the vortex environment.}
	\label{tbl-fluid}
	\setlength{\tabcolsep}{7pt}
	\footnotesize
	\begin{tabular}{lc}
		\toprule
		\textbf{Method}                                                    & \textbf{Tracking Error (m)} \\
		\midrule
		\textbf{Basic NMPC}                                                & $1.4932 \pm 0.0199$           \\
		\textbf{NMPC + ESO (\cite{guo2011ESO})}                                                & $0.0959 \pm 0.0088$           \\
		\rowcolor{green!20} \textbf{NMPC + Neural Network Observer (Ours)} & $\mathbf{0.0499 \pm 0.0019}$  \\
		\bottomrule
	\end{tabular}
	\vspace{-1.1em}
\end{table}

Table~\ref{tbl-fluid} shows that adding ESO reduces the Basic NMPC error from $1.4932 \pm 0.0199$ m to $0.0959 \pm 0.0088$ m, while the proposed Neural Network Observer + NMPC further lowers it to $0.0499 \pm 0.0019$ m, a $48.0\%$ reduction relative to ESO. Figure~\ref{fig-AUV} confirms this improvement qualitatively: the neural observer tracks the reference through the vortex street more closely, indicating stronger disturbance rejection than the classical ESO baseline.

\section{Conclusion}\label{sec-conclusion}

In this paper, we present a scalable, provably stable neural network observer framework for state and disturbance estimation in uncertain dynamical systems. To mitigate the computational intractability of enforcing global LMI constraints on high-capacity networks, we introduce a two-stage pipeline: point-guided Lyapunov pre-training followed by LMI fine-tuning. This approach permits the use of deep architectures while maintaining a final global LMI-based Lyapunov certificate. In addition, the Stage~I analysis provides explicit local stability radii and a probabilistic coverage interpretation over a prescribed compact error-state domain under a full-support sampling assumption. Evaluations on Quad-UAV and AUV benchmarks indicate improved robustness and generalization under the tested conditions compared to data-driven and classical ESO-based baselines, while the X-29 ablations in Section~\ref{sec-ablation} examine training efficiency and Appendix~\ref{sec-exp1} reports additional capacity and robustness results. By satisfying stability requirements offline without requiring per-task disturbance datasets, our framework provides a practical means for deploying deep learning methods in safety-critical control settings under model uncertainty. More broadly, the same training-and-certification principle can be extended to neural network systems whose Lyapunov stability guarantees are imposed through LMI constraints.

\newpage
\begin{ack}
	This work was supported by FAST-LAB, College of Control Science and Engineering, Zhejiang University.
\end{ack}

\bibliographystyle{abbrvnat}
\bibliography{reference}

\newpage

\appendix

\section{Limitations and Future Work}\label{sec-lim}

Our framework has several limitations. First, the Stage~I analysis is carried out on a prescribed compact domain $\mathcal{X}$. The local-radius and probabilistic-coverage results in Theorems~\ref{thm-local_radius} and \ref{thm-pos} characterize domain-dependent properties of pre-training. The scope of the final LMI certificate is determined separately by its certification assumptions. Finite-sample guarantees for point-guided training remain open, as do the required sample size and conditions for successful gradient-based certification.

Second, the method relies on a reasonably informative nominal model and on deployment conditions that are consistent with the offline certification assumptions. Significant model mismatch, severe noise, or out-of-distribution disturbances may therefore degrade both estimation accuracy and certificate usefulness. Finally, the two-stage pipeline improves scalability relative to direct LMI-constrained training. The offline certification step can remain costly for very large architectures or high-dimensional systems. The controller extension in Appendix~\ref{apd-controller-extension} suggests broader applicability under the assumptions of the underlying controller LMI certificate and requires dedicated closed-loop validation. Future work will focus on less conservative and more scalable certificates, finite-sample analysis for Stage~I, and deployment-aware mechanisms such as runtime monitoring, safe switching, and controller-side experiments.

\section{Structure of the Neural Network Observer}\label{apd-observer-param}

The neural term $\pi_\theta$ in the observer dynamics is implemented as a residual feedforward network, as illustrated in Figure~\ref{fig-resnet-param}. The network input is $x \in \mathbb{R}^{n_{\mathrm{in}}}$, and the network output is $\pi_\theta(x)\in\mathbb{R}^{n_{\mathrm{out}}}$. For an $L$-layer network with hidden widths $n_1,\dots,n_L$, the layer-wise propagation is defined as
\begin{equation}
	\pi_\theta^{[0]}(x)=x,\qquad
	\pi_\theta^{[l]}(x)=\sigma\!\left(W^{[l]}\pi_\theta^{[l-1]}(x)\right),\quad l=1,\dots,L,
\end{equation}
where $W^{[l]}\in\mathbb{R}^{n_l\times n_{l-1}}$, $n_0=n_{\mathrm{in}}$, and $\sigma(\cdot)$ is the activation function. The observer network output is
\begin{equation}
	\pi_\theta(x)=W^{[L+1]}\pi_\theta^{[L]}(x)+W^{[L+2]}\pi_\theta^{[0]}(x),
\end{equation}
where $W^{[L+1]}\in\mathbb{R}^{n_{\mathrm{out}}\times n_L}$ maps the final hidden layer to the output and $W^{[L+2]}\in\mathbb{R}^{n_{\mathrm{out}}\times n_{\mathrm{in}}}$ is the linear shortcut. The parameter set is
\begin{equation}
	\theta=\{L,n_1,\dots,n_L,W^{[1]},\dots,W^{[L+2]}\}.
\end{equation}
The output vector is partitioned according to the observer dynamics into $\pi_{\theta_1}$ and $\pi_{\theta_2}$.

The LMI lifting matrices $\hat{R}_{\pi}(\theta)$ and $\hat{R}_{\xi}(\theta)$ in Lemma~\ref{thm-LMI} follow the neural-network isolation construction in \cite{chen2023neural}. The stacked pre-activation vector and activation-output vector are
\begin{equation}
	\xi_{\sigma}(x)=
	\begin{bmatrix}
		W^{[1]}\pi_\theta^{[0]}(x) \\
		\vdots                     \\
		W^{[L]}\pi_\theta^{[L-1]}(x)
	\end{bmatrix},\qquad
	s_{\sigma}(x)=
	\begin{bmatrix}
		\pi_\theta^{[1]}(x) \\
		\vdots              \\
		\pi_\theta^{[L]}(x)
	\end{bmatrix}.
\end{equation}
With $n_{\sigma}=\sum_{l=1}^{L}n_l$, the residual network can be written as the linear relation
\begin{equation}
	\begin{bmatrix}
		\pi_\theta(x) \\
		\xi_{\sigma}(x)
	\end{bmatrix}
	=
	\begin{bmatrix}
		N_{\pi x}  & N_{\pi s}  \\
		N_{\xi x} & N_{\xi s}
	\end{bmatrix}
	\begin{bmatrix}
		x \\
		s_{\sigma}(x)
	\end{bmatrix},
\end{equation}
where
\begin{equation}
	\begin{aligned}
		N_{\pi x}  &= W^{[L+2]},\qquad
		N_{\pi s} = \begin{bmatrix} O & \cdots & O & W^{[L+1]} \end{bmatrix},\\
		N_{\xi x} &= \begin{bmatrix} (W^{[1]})^\top & O & \cdots & O \end{bmatrix}^\top,
	\end{aligned}
\end{equation}
and $N_{\xi s}\in\mathbb{R}^{n_{\sigma}\times n_{\sigma}}$ is the block lower-shift matrix whose $(l,l-1)$ block is $W^{[l]}$ for $l=2,\dots,L$ and whose other blocks are zero. Therefore,
\begin{equation}
	\hat{R}_{\pi}(\theta)=
	\begin{bmatrix}
		I        & O         \\
		N_{\pi x} & N_{\pi s}
	\end{bmatrix},\qquad
	\hat{R}_{\xi}(\theta)=
	\begin{bmatrix}
		N_{\xi x} & N_{\xi s} \\
		O         & I
	\end{bmatrix}.
\end{equation}
In Lemma~\ref{thm-LMI}, this construction is applied to the observer nonlinearity $\nu(\eta,\theta)$, whose network input is the corresponding error-feedback signal represented as a fixed linear map of $\eta$. This linear map is absorbed into $W^{[1]}$ and $W^{[L+2]}$ when forming the $N$ matrices. If $\pi_{\theta_1}$ and $\pi_{\theta_2}$ are implemented as separate subnetworks, the corresponding $N$ matrices are obtained by vertical and block-diagonal stacking of the subnetwork matrices.

\begin{figure}[t]
	\centering
	\includegraphics[width=0.85\linewidth]{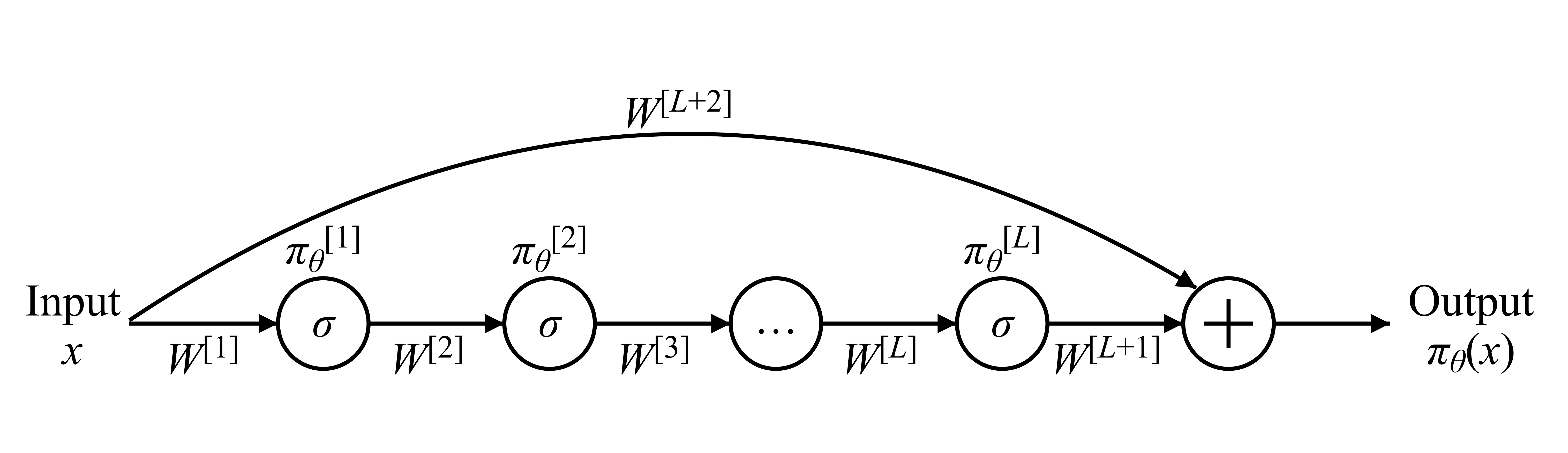}
	\caption{\textbf{Residual network parameterization for the neural observer.} The network uses hidden layers with the activation function $\sigma$ and a linear shortcut $W^{[L+2]}$ from the input to the output.}
	\label{fig-resnet-param}
\end{figure}

\section{Proof for Section \ref{sec-results}}\label{prf-all}

\subsection{Proof of Lemma \ref{lem-local}}\label{prf-lem-local}

\begin{proof}
	If the data-dependent term of the loss is zero, we have
	\begin{equation*}
		[F_V(\eta_i, \theta^*) + \xi]_+ = 0, \forall i \in \{1,\dots,N\}.
	\end{equation*}
	By the definition of the ReLU function, this requires
	\begin{equation*}
		F_V(\eta_i, \theta^*) + \xi \le 0,
	\end{equation*}
	and hence,
	\begin{equation*}
		F_V(\eta_i, \theta^*) \le -\xi.
	\end{equation*}
\end{proof}

\subsection{Proof of Theorem \ref{thm-local_radius}}\label{prf-local_radius}

\begin{proof}
	Let $\eta = \eta_s + \zeta \in \mathcal{X}$. Since $Q=A_\epsilon^\top P+P A_\epsilon+\rho P$ is symmetric, we evaluate the difference $F_V(\eta,\theta) - F_V(\eta_s,\theta)$ as follows:
	\begin{equation*}
		\begin{aligned}
			 & \quad F_V(\eta,\theta)-F_V(\eta_s,\theta)                                                                                          \\
			 & = \eta^\top Q\eta - \eta_s^\top Q\eta_s - 2\nu(\eta,\theta)^\top P\eta + 2\nu(\eta_s,\theta)^\top P\eta_s + 2\epsilon M_w(\|P\eta\| - \|P\eta_s\|) \\
			 & = 2\eta_s^\top Q\zeta + \zeta^\top Q\zeta - 2(\nu(\eta,\theta)-\nu(\eta_s,\theta))^\top P\eta_s - 2\nu(\eta_s,\theta)^\top P\zeta                      \\
			 & \quad - 2(\nu(\eta,\theta)-\nu(\eta_s,\theta))^\top P\zeta + 2\epsilon M_w(\|P\eta\| - \|P\eta_s\|)                                                \\
			 & \le 2\|Q\eta_s\|\,\|\zeta\| + \|Q\|\,\|\zeta\|^2 + 2\|P\eta_s\|\,\|\nu(\eta,\theta)-\nu(\eta_s,\theta)\| + 2\|P\nu(\eta_s,\theta)\|\,\|\zeta\|         \\
			 & \quad + 2\|P\|\,\|\nu(\eta,\theta)-\nu(\eta_s,\theta)\|\,\|\zeta\| + 2\epsilon M_w\|P\|\,\|\zeta\|                                                \\
			 & \le (2\|Q\eta_s\| + 2\|P\eta_s\|L_\nu + 2\|P\nu(\eta_s,\theta)\| + 2\epsilon M_w\|P\|)\|\zeta\| + (\|Q\| + 2\|P\|L_\nu)\|\zeta\|^2.
		\end{aligned}
	\end{equation*}
	Applying the triangle inequality, the Cauchy--Schwarz inequality, and the Lipschitz condition $\|\nu(\eta,\theta) - \nu(\eta_s,\theta)\| \le L_\nu \|\zeta\|$, we obtain
	\begin{equation*}
		F_V(\eta,\theta) - F_V(\eta_s,\theta) \le C_1(\eta_s)\|\zeta\| + C_2\|\zeta\|^2,
	\end{equation*}
	where $C_1(\eta_s) = 2\|Q\eta_s\| + 2\|P\eta_s\|L_\nu + 2\|P \nu(\eta_s,\theta)\| + 2\epsilon M_w\|P\|$ and $C_2 = \|Q\| + 2\|P\|L_\nu$.

	Therefore, we have
	\begin{equation*}
		F_V(\eta,\theta) \le F_V(\eta_s,\theta) + C_1(\eta_s)\|\zeta\| + C_2\|\zeta\|^2.
	\end{equation*}
	To ensure $F_V(\eta,\theta) \le 0$, it suffices to require
	\begin{equation*}
		C_2\|\zeta\|^2 + C_1(\eta_s)\|\zeta\| + F_V(\eta_s,\theta) \le 0.
	\end{equation*}
	Since $F_V(\eta_s,\theta) < 0$, the polynomial $h(r)=C_2r^2 + C_1(\eta_s)r + F_V(\eta_s,\theta)$ satisfies $h(0)=F_V(\eta_s,\theta)<0$. If $C_2>0$, then $h(r)\to+\infty$ as $r\to\infty$, and $h$ has a unique positive root
	\begin{equation*}
		r_s = \frac{-C_1(\eta_s) + \sqrt{C_1(\eta_s)^2 - 4C_2F_V(\eta_s,\theta)}}{2C_2}.
	\end{equation*}
	If $C_2=0$ and $C_1(\eta_s)>0$, choose
	\begin{equation*}
		r_s = \frac{-F_V(\eta_s,\theta)}{C_1(\eta_s)}.
	\end{equation*}
	If $C_2=0$ and $C_1(\eta_s)=0$, then $h(r)=F_V(\eta_s,\theta)<0$ for all $r\ge 0$, so any positive $r_s$ is admissible. In all cases, for every $\eta\in\mathcal{B}(\eta_s,r_s)\cap\mathcal{X}$, we have $h(\|\zeta\|)\le 0$, which implies $F_V(\eta,\theta)\le 0$. Since $\dot V(\eta)+\rho V(\eta)\le F_V(\eta,\theta)$, the sufficient Lyapunov decrease condition also holds throughout $\mathcal{B}(\eta_s,r_s)\cap\mathcal{X}$.
\end{proof}

\subsection{Proof of Corollary \ref{cor-rad}}\label{prf-rad}

\begin{proof}
	By Theorem \ref{thm-local_radius}, each sampled point $\eta_s \in \mathcal{D}$ admits a certified radius. When $C_2>0$, this radius can be chosen as
	\begin{equation}
		r_s = \frac{-C_1(\eta_s) + \sqrt{C_1(\eta_s)^2 - 4 C_2 F_V(\eta_s,\theta)}}{2 C_2}.
	\end{equation}
	Since $F_V(\eta_s,\theta) \le -\xi < 0$ for all $\eta_s \in \mathcal{D}$,
	\begin{equation}
		r_s \ge \frac{-C_1(\eta_s) + \sqrt{C_1(\eta_s)^2 + 4 C_2 \xi}}{2 C_2}.
	\end{equation}
	Because $\mathcal{X}$ is compact and $\nu(\cdot,\theta)$ is Lipschitz on $\mathcal{X}$, there exist constants $R_\eta,R_\nu < \infty$ such that $\|\eta_s\| \le R_\eta$ and $\|\nu(\eta_s,\theta)\| \le R_\nu$ for all $\eta_s \in \mathcal{X}$. Therefore,
	\begin{equation}
		C_1(\eta_s) = 2\|Q\eta_s\| + 2\|P\eta_s\|L_\nu + 2\|P \nu(\eta_s,\theta)\| + 2\epsilon M_w\|P\|
	\end{equation}
	is uniformly bounded on $\mathcal{X}$. For example, we may take
	\begin{equation}
		\bar{C}_1 = 2\|Q\|R_\eta + 2\|P\|L_\nu R_\eta + 2\|P\|R_\nu + 2\epsilon M_w\|P\|,
	\end{equation}
	so that $C_1(\eta_s) \le \bar{C}_1$ for all $\eta_s \in \mathcal{X}$. Moreover, $C_2 = \|Q\| + 2\|P\|L_\nu$ is independent of $\eta_s$ and satisfies $C_2 \ge 0$.

	If $C_2>0$, define
	\begin{equation}
		g(c) \coloneqq \frac{-c + \sqrt{c^2 + 4 C_2 \xi}}{2 C_2}, \qquad c \ge 0.
	\end{equation}
	Then
	\begin{equation}
		g'(c) = \frac{-1 + c/\sqrt{c^2 + 4 C_2 \xi}}{2 C_2} < 0,
	\end{equation}
	so $g$ is strictly decreasing on $[0,\infty)$. Hence, $C_1(\eta_s) \le \bar{C}_1$ implies $g(C_1(\eta_s)) \ge g(\bar{C}_1)$, and all local stability radii admit the uniform lower bound
	\begin{equation}
		r_{\min} \ge \underline{r} \coloneqq \frac{-\bar{C}_1 + \sqrt{\bar{C}_1^2 + 4 C_2 \xi}}{2 C_2} > 0.
	\end{equation}
	If $C_2=0$ and $\bar{C}_1>0$, then each radius with $C_1(\eta_s)>0$ satisfies
	\begin{equation}
		r_s = \frac{-F_V(\eta_s,\theta)}{C_1(\eta_s)}
		\ge \frac{\xi}{C_1(\eta_s)}
		\ge \frac{\xi}{\bar{C}_1},
	\end{equation}
	while samples with $C_1(\eta_s)=0$ allow any positive radius. Hence, we may take $\underline{r}=\xi/\bar{C}_1>0$ in this case. Finally, if $C_2=0$ and $\bar{C}_1=0$, then $C_1(\eta_s)=0$ for all $\eta_s\in\mathcal{X}$, so any fixed positive radius is admissible for all samples. Therefore a strictly positive uniform lower bound exists in all cases.
\end{proof}

\subsection{Proof of Theorem \ref{thm-pos}}\label{prf-pos}

\begin{proof}
	Since $\mathcal{X}$ is compact, for $\bar r = r_{\min}/2$, there exists a finite $\bar r$-net $\{c_1,\dots,c_{N_{\mathrm{cov}}}\} \subset \mathcal{X}$ such that
	\begin{equation*}
		\mathcal{X} \subset \bigcup_{k=1}^{N_{\mathrm{cov}}} \mathcal{B}(c_k,\bar r).
	\end{equation*}
	By the full-support assumption on $\mu$, each set $\mathcal{B}(c_k,\bar r)\cap\mathcal{X}$ has strictly positive probability. Define
	\begin{equation*}
		q \coloneqq \min_{1 \le k \le N_{\mathrm{cov}}} \mu(\mathcal{B}(c_k,\bar r)\cap\mathcal{X}) > 0.
	\end{equation*}

	For any fixed $k$, the probability that none of the $T$ independent successful samples falls in $\mathcal{B}(c_k,\bar r)\cap\mathcal{X}$ is at most $(1-q)^T$. By the union bound, the probability that at least one of the $N_{\mathrm{cov}}$ covering balls contains no successful sample is at most $N_{\mathrm{cov}}(1-q)^T$.

	Let $\mathcal{E}$ be the event that every $\bar r$-ball $\mathcal{B}(c_k,\bar r)$ contains at least one successful sample. Then
	\begin{equation*}
		\mathbb{P}(\mathcal{E}) \ge 1-N_{\mathrm{cov}}(1-q)^T.
	\end{equation*}

	Conditioning on $\mathcal{E}$, consider any arbitrary point $\bar{\eta} \in \mathcal{X}$. By construction of the $\bar r$-net, there exists a center $c_k$ such that $\|\bar{\eta} - c_k\| \le \bar r = r_{\min}/2$. Since $\mathcal{E}$ holds, there exists a successful sample $\eta_i \in \mathcal{B}(c_k,\bar r)$, so $\|\eta_i - c_k\| \le r_{\min}/2$.

	By the triangle inequality,
	\begin{equation*}
		\|\bar{\eta} - \eta_i\|
		\le \|\bar{\eta} - c_k\| + \|c_k - \eta_i\|
		\le \frac{r_{\min}}{2} + \frac{r_{\min}}{2}
		= r_{\min}.
	\end{equation*}
	Hence, $\bar{\eta} \in \mathcal{B}(\eta_i,r_{\min})\cap\mathcal{X} \subset U_T$. Since $\bar{\eta}$ was arbitrary, $\mathcal{X} \subset U_T$ on the event $\mathcal{E}$.

	Therefore,
	\begin{equation*}
		\mathbb{P}(\mathcal{X} \subset U_T) \ge \mathbb{P}(\mathcal{E}) \ge 1-N_{\mathrm{cov}}(1-q)^T.
	\end{equation*}
	Taking $T \to \infty$ gives
	\begin{equation*}
		\lim_{T \to \infty} \mathbb{P}(\mathcal{X} \subset U_T) = 1,
	\end{equation*}
	which completes the proof.
\end{proof}

\begin{corollary}\label{cor-finite}
	Under the assumptions of Theorem~\ref{thm-pos}, for any $\alpha \in (0,1)$, if 
	\begin{equation}
		T \ge \frac{\log(N_{\mathrm{cov}}/\alpha)}{\log\!\big(1/(1-q)\big)},
	\end{equation}
	then 
	\begin{equation}
		\mathbb{P}(\mathcal{X} \subset U_T) \ge 1-\alpha.
	\end{equation}
\end{corollary}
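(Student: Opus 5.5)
The corollary follows by inverting the bound of Theorem~\ref{thm-pos}. That theorem already gives $\mathbb{P}(\mathcal{X}\subset U_T)\ge 1-N_{\mathrm{cov}}(1-q)^T$ with constants $N_{\mathrm{cov}}<\infty$ and $q>0$ that depend only on $\mathcal{X}$, $\mu$ and $r_{\min}$, not on $T$. It therefore suffices to show that the stated lower bound on $T$ forces $N_{\mathrm{cov}}(1-q)^T\le\alpha$.

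First I will settle the degenerate case $q=1$, where $1-q=0$. The right-hand side $\log(N_{\mathrm{cov}}/\alpha)/\log(1/(1-q))$ is then interpreted as $0$, or the condition as vacuous. For every $T\ge 1$ we have $(1-q)^T=0$, so the probability is $1\ge 1-\alpha$ directly. Note that $q\le 1$ always holds since $q$ is a probability, and that $T\ge1$ because we are counting samples.

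For $q\in(0,1)$ we have $\log(1/(1-q))>0$, so multiplying the hypothesis by this positive quantity preserves the inequality:
\[
T\log\!\big(1/(1-q)\big)\ge \log(N_{\mathrm{cov}}/\alpha).
\]
This is equivalent to $-T\log(1-q)\ge \log N_{\mathrm{cov}}-\log\alpha$, that is, $\log N_{\mathrm{cov}}+T\log(1-q)\le\log\alpha$. Exponentiating, which is monotone, gives $N_{\mathrm{cov}}(1-q)^T\le\alpha$. Substituting into Theorem~\ref{thm-pos} yields $\mathbb{P}(\mathcal{X}\subset U_T)\ge1-\alpha$.

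There is no real obstacle here. The only points needing care are the sign of $\log(1/(1-q))$, which fixes the direction of the inequality when dividing, and the edge case $q=1$. If $N_{\mathrm{cov}}<\alpha$, the required threshold is negative and the bound holds trivially for all $T$, which is consistent with the argument.
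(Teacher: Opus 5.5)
Your proposal is correct and follows the paper's argument exactly: the paper notes that the stated bound on $T$ is equivalent to $N_{\mathrm{cov}}(1-q)^T\le\alpha$ and then invokes Theorem~\ref{thm-pos}. Your handling of the sign of $\log(1/(1-q))$ and of the degenerate case $q=1$ is extra care the paper omits, and it is needed there, since with $T=0$ one has $U_0=\emptyset$; your remark about $N_{\mathrm{cov}}<\alpha$ is vacuous, because $N_{\mathrm{cov}}\ge 1>\alpha$.
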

\begin{proof}
	By Theorem~\ref{thm-pos}, it suffices to require $N_{\mathrm{cov}}(1-q)^T \le \alpha$, which is equivalent to the stated bound on $T$.
\end{proof}

\section{Extension to LMI-Certified Neural Network Controllers}\label{apd-controller-extension}

The main paper develops the two-stage framework for neural network observers. The same training principle applies to neural network controllers whose closed-loop Lyapunov certificates are imposed through LMIs. A representative example is the recurrent neural network (RNN) controller synthesis framework of \cite{gu2022recurrent}, which considers a partially observed discrete-time plant
\begin{equation}
	x_{k+1}=A_Gx_k+B_Gu_k,\qquad y_k=C_Gx_k,
\end{equation}
where the controller only observes $y_k$, not the full state $x_k$. The controller is therefore modeled as a dynamic RNN with hidden state $\xi_k$:
\begin{equation}
	\begin{aligned}
		\xi_{k+1} &= A_K\xi_k+B_{K1}w_k+B_{K2}y_k,\\
		u_k &= C_{K1}\xi_k+D_{K1}w_k+D_{K2}y_k,\\
		v_k &= C_{K2}\xi_k+D_{K3}y_k,\qquad w_k=\phi(v_k),
	\end{aligned}
\end{equation}
where $v_k$ and $w_k$ are the activation input and output, respectively, and $\phi$ is applied elementwise and described by sector or integral quadratic constraint (IQC) conditions.

The key technical step in \cite{gu2022recurrent} is a loop transformation of the activation sector, yielding a transformed controller with parameters $\tilde{\theta}$ and a transformed nonlinearity $\tilde{\phi}$ satisfying a simple sector bound. With the augmented closed-loop state $\zeta_k=[x_k^\top,\xi_k^\top]^\top$, the transformed interconnection can be written abstractly as
\begin{equation}
	\zeta_{k+1}=\mathcal{A}(\tilde{\theta})\zeta_k+\mathcal{B}(\tilde{\theta})z_k,\qquad
	v_k=\mathcal{C}(\tilde{\theta})\zeta_k+\mathcal{D}(\tilde{\theta})z_k,\qquad
	z_k=\tilde{\phi}(v_k).
\end{equation}
Here $z_k$ is the output of the transformed activation $\tilde{\phi}$. Combining the sector constraint for $\tilde{\phi}$ with a quadratic Lyapunov function $V(\zeta)=\zeta^\top P\zeta$ and the S-lemma yields a sequentially convex LMI certificate. For uncertain plants, \cite{gu2022recurrent} augments the nominal dynamics with an IQC description of the uncertainty:
\begin{equation}
	\begin{aligned}
		\zeta_{k+1} &= \mathcal{A}\zeta_k+\mathcal{B}_1 q_k+\mathcal{B}_2 z_k,\\
		v_k &= \mathcal{C}_1\zeta_k+\mathcal{D}_1 q_k+\mathcal{D}_2 z_k,\\
		r_k &= \mathcal{C}_2\zeta_k+\mathcal{D}_3 q_k+\mathcal{D}_4 z_k,
	\end{aligned}
\end{equation}
where $q_k$ is the uncertainty output and $r_k$ is the IQC-filter output. For fixed matrices $\bar P$ and $\bar\Lambda$ from the previous iterate, the robust LMI is
\begin{equation}
	\begin{bmatrix}
		R^\top\Gamma R &
		\begin{bmatrix}
			\mathcal{A} & \mathcal{B}_1 & \mathcal{B}_2 \\
			\mathcal{C}_1 & \mathcal{D}_1 & \mathcal{D}_2
		\end{bmatrix}^{\!\top} \\[0.8em]
		\begin{bmatrix}
			\mathcal{A} & \mathcal{B}_1 & \mathcal{B}_2 \\
			\mathcal{C}_1 & \mathcal{D}_1 & \mathcal{D}_2
		\end{bmatrix} &
		\begin{bmatrix}
			Q_1 & O \\
			O & Q_2
		\end{bmatrix}
	\end{bmatrix}
	\succeq O,
	\label{eq:rnn-controller-lmi}
\end{equation}
where
\begin{equation}
	\Gamma =
	\mathrm{diag}\!\left(
	\rho^2(2\bar P-\bar P^\top Q_1\bar P),
	2\bar\Lambda-\bar\Lambda^\top Q_2\bar\Lambda,
	-M
	\right),\qquad
	R =
	\begin{bmatrix}
		I & O & O\\
		O & O & I\\
		\mathcal{C}_2 & \mathcal{D}_3 & \mathcal{D}_4
	\end{bmatrix}.
\end{equation}
Here $M\in\mathcal{M}$ is the IQC multiplier for the plant uncertainty, with $\mathcal{M}$ denoting the admissible convex multiplier set. This LMI is jointly convex in $(Q_1,Q_2,\tilde{\theta},M)$ for fixed $(\bar P,\bar\Lambda)$, where $Q_1$ and $Q_2$ are inverse-type variables associated with the Lyapunov matrix and sector multiplier. Its feasible set defines the robust convex inner approximation
\begin{equation}
	\mathcal{C}_R(\bar P,\bar\Lambda)
	=
	\{\tilde{\theta}:\exists Q_1,Q_2,\ M\in\mathcal{M}\ \mathrm{s.t.}\ 
	\eqref{eq:rnn-controller-lmi}\},
\end{equation}
which guarantees exponential stability with the prescribed rate. The nominal linear time-invariant (LTI) case uses the analogous set $\mathcal{C}(\bar P,\bar\Lambda)$ obtained by removing the robust IQC block.

To extend our two-stage framework, Stage~I replaces the observer error state $\eta$ with the augmented closed-loop state $\zeta$ and uses sampled Lyapunov decrease as a pre-training signal. For sampled augmented states $\{\zeta_i\}_{i=1}^N$, define the one-step Lyapunov residual
\begin{equation}
	F_{\mathrm{ctrl}}(\zeta,\tilde{\theta})
	=
	V(\zeta^+)-\rho^2 V(\zeta),\qquad
	\zeta^+=f_{\mathrm{cl}}(\zeta;\tilde{\theta}),
\end{equation}
where $0<\rho<1$ is the desired exponential decay rate. The controller can then be trained with a task objective plus a point-guided Lyapunov penalty
\begin{equation}
	\mathcal{L}_{\mathrm{point}}^{\mathrm{ctrl}}(\tilde{\theta})
	=
	\mathcal{L}_{\mathrm{task}}(\tilde{\theta})
	+
	\alpha \frac{1}{N}\sum_{i=1}^{N}
	\big[F_{\mathrm{ctrl}}(\zeta_i,\tilde{\theta})+\delta_{\mathrm{ctrl}}\big]_+
	+
	\gamma\|\tilde{\theta}\|^2,
\end{equation}
where $\delta_{\mathrm{ctrl}}>0$ is the controller Lyapunov margin, $\alpha,\gamma>0$ are loss weights, and $\zeta_i$ may be sampled from rollouts, reset distributions, or a prescribed operating domain.

Stage~II then starts from the pre-trained transformed controller $\tilde{\theta}_{\mathrm{pre}}$ and performs the LMI-certified refinement using the same projection structure as \cite{gu2022recurrent}, for example
\begin{equation}
	\begin{aligned}
		\min_{Q_1,Q_2,\tilde{\theta}}\quad
		& \|\tilde{\theta}-\tilde{\theta}_{\mathrm{pre}}\|_F^2
		+\beta_1\|Q_1-\bar P^{-1}\|_F^2
		+\beta_2\|Q_2-\bar\Lambda^{-1}\|_F^2\\
		\mathrm{s.t.}\quad
		& \eqref{eq:rnn-controller-lmi},
	\end{aligned}
\end{equation}
where $\beta_1,\beta_2>0$ are regularization weights. Alternatively, one may use the eigenvalue-penalty variant from our observer experiments. After a feasible solution is found, one can update $\bar P=Q_1^{-1}$ and $\bar\Lambda=Q_2^{-1}$ and repeat the projection if desired. The final controller inherits the Lyapunov/LMI stability guarantee of \cite{gu2022recurrent}, while the point-guided stage reduces the amount of SDP-based projection or LMI fine-tuning needed for high-capacity RNN controllers.

\section{Additional X-29 Capacity and Robustness Results}\label{sec-exp1}

We use the X-29 aircraft state-space model to evaluate robustness under structural model mismatch and to study the effect of observer capacity. The nominal system is
\begin{equation}
	\begin{cases}
		\dot{x}(t) = Ax(t)+Bu(t)+d(t), \\
		y(t) = Cx(t),
	\end{cases}
\end{equation}
where $x \in \mathbb{R}^4$ and the nominal matrices $A$, $B$, and $C$ are taken from \cite{1992ntrs.rept09932B}. Their numerical values are provided in Appendix~\ref{apd-exp1}.

We compare four observers of increasing capacity: a Linear Baseline, Tiny neural network (Tiny NN), Small NN, and Large NN. A state-feedback controller $u(t)=-K\hat{x}(t)$ is used in all cases, and architectural details are given in Appendix~\ref{apd-exp1}.

To test robustness to structural mismatch between the nominal model and the true system, we perturb the state transition matrix $A$ and the observation matrix $C$ as
\begin{equation}\label{eqn-disAC}
	X' = X + \sigma \frac{Y}{\|Y\|} \|X\|,
\end{equation}
where $X \in \{A, C\}$, $Y$ is a random matrix of the same dimension with entries sampled independently from the standard normal distribution, and $\sigma \in [0, 0.5]$ is the perturbation coefficient. For each $\sigma$, we simulate $20.0$ seconds of closed-loop operation and average over $100$ random seeds. We report two metrics: (i) \emph{success rate}, the fraction of perturbed trials whose trajectories stay within the stability tube $\|x(t)\| \le r_{\mathrm{tube}}$ with $r_{\mathrm{tube}} = 3.0$ m over the full horizon, and (ii) \emph{mean tracking error}, the average Euclidean distance between the actual and desired trajectories over successful trials.

\begin{table}
	\centering
	\caption{\textbf{Quantitative performance comparison under extreme perturbation ($\mathbf{\sigma=0.5}$).} Results are reported as mean $\pm$ standard error of the mean (SEM) across 100 independent trials.}
	\label{tbl-NN_cmp}
	\setlength{\tabcolsep}{7.0pt}
	\footnotesize
	\begin{tabular}{>{\bfseries}lcc||>{\bfseries}lcc}
		\toprule
		\textbf{Architecture} & \textbf{Tracking Error (m)}  & \textbf{Success Rate} & \textbf{Architecture}                 & \textbf{Tracking Error (m)}  & \textbf{Success Rate} \\
		\midrule
		Linear                                & $0.2052 \pm 0.0402$          & 0.49                  & Small NN                              & $0.0558 \pm 0.0312$          & 0.52                  \\
		Tiny NN                               & $0.2056 \pm 0.0402$          & 0.49                  & \cellcolor{green!20}\textbf{Large NN} & \cellcolor{green!20}$\mathbf{0.0118 \pm 0.0028}$ & \cellcolor{green!20}\textbf{0.56} \\
		\bottomrule
	\end{tabular}
\end{table}

\begin{figure}[t]
	\centering
	\includegraphics[width=0.8\linewidth]{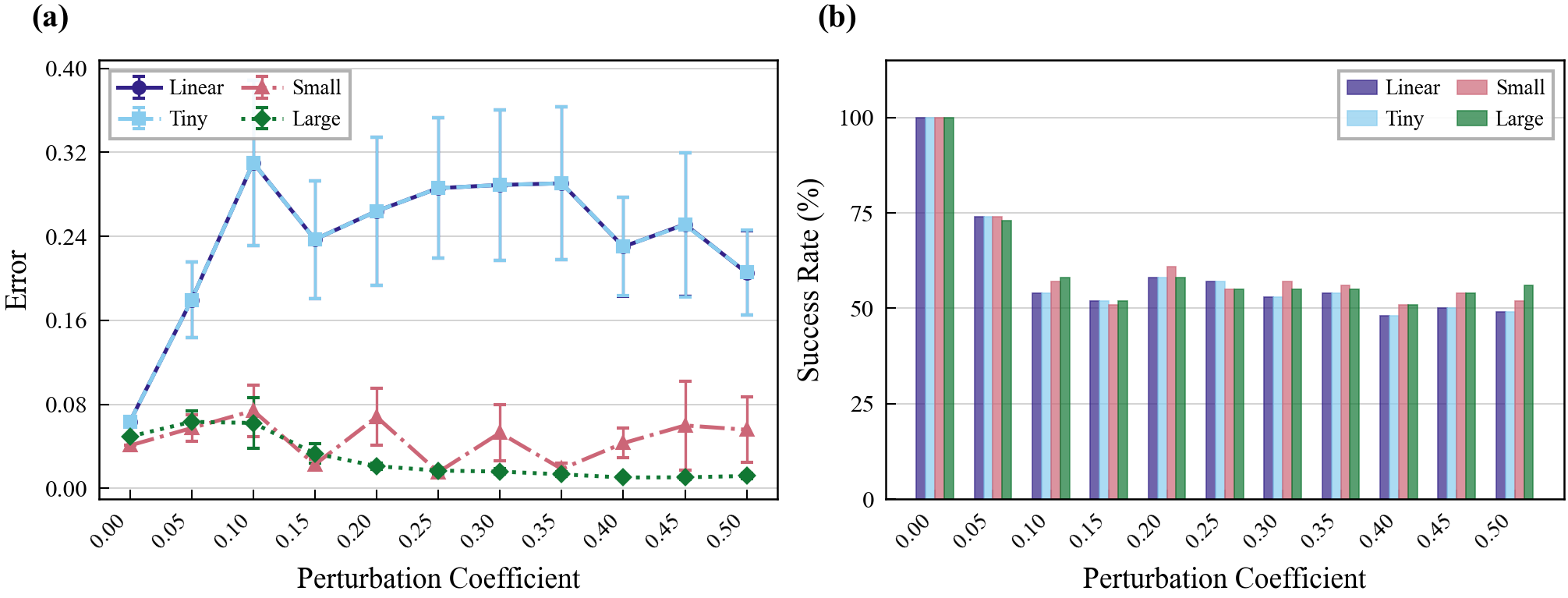}
	\caption{\textbf{Robustness of neural network observers under structural perturbations in the X-29 aircraft benchmark.} Tracking error (top) and success rate (bottom) across different network capacities. Error bars denote the mean $\pm$ SEM across 100 independent trials.}
	\label{fig-NN_cmp}
\end{figure}

Figure~\ref{fig-NN_cmp} and Table~\ref{tbl-NN_cmp} show that robustness improves markedly with network capacity. As $\sigma$ increases, the linear observer and Tiny NN degrade quickly, whereas the Large NN maintains the lowest tracking error and the highest success rate. Under the hardest setting $\sigma=0.5$, the Large NN reduces the tracking error to $0.0118 \pm 0.0028$, compared with $0.2052 \pm 0.0402$ for the linear baseline. Reporting tracking error over successful trials isolates control quality after stabilization, while the success-rate metric accounts for failures separately.

\section{Experimental Details}\label{apd-exp}

Unless otherwise stated, all neural network observers in our experiments use \texttt{tanh} as the activation function for hidden layers, and both Stage~I pre-training and Stage~II fine-tuning are optimized using the \texttt{Adam} optimizer \cite{kingma2015adam}. For point-guided Lyapunov pre-training, the sampled error states are drawn from a \textbf{truncated normal distribution} supported on the prescribed compact domain $\mathcal{X}$.

\subsection{Experiment: Quad-UAV}\label{apd-exp2}

The baseline control utilizes a PID controller with the following gains:
\begin{equation}
	K_P = \begin{bmatrix} 0.5 \\ 0.5 \\ 0.75 \\ 8.0 \\ 3.0 \\ 3.0 \end{bmatrix}, \quad
	K_I = \begin{bmatrix} 0.0 \\ 0.0 \\ 0.0 \\ 0.0 \\ 0.0 \\ 0.0 \end{bmatrix}, \quad
	K_D = \begin{bmatrix} 0.75 \\ 0.75 \\ 1.0 \\ 1.5 \\ 1.5 \\ 0.75 \end{bmatrix}. \nonumber
\end{equation}
To account for model uncertainties, the ground effect parameters are perturbed to $g_1 = 0.1 \pm 0.01$ and $g_2 = 1.0 \pm 0.25$.

For the Neural Lander baseline, we generated 40,000 data points to learn the dynamics, with training hyperparameters following those specified in \cite{shi2019neural}.

For the proposed Neural Network Observer, the pre-training phase is configured with $\mathrm{epoch} = 100000$ and $\mathrm{lr} = 0.01$, and the fine-tuning phase uses $\mathrm{epoch} = 15000$, $\mathrm{lr} = 0.01$, and $\epsilon = 0.01$. The Lyapunov and LMI hyperparameters are
\begin{equation}
	\begin{aligned}
		P     & = \mathrm{diag}(1,1,1,0.1,0.1,0.1,0.1,0.1,0.1,0.1,0.1,0.1,\\
			  & \qquad\quad\;\; 1,1,1,0.1,0.1,0.1,0.1,0.1,0.1,0.1,0.1,0.1), \\
		\rho  & = 0.1,\quad M_w = 0.5,\quad \xi = 10.0,\quad \beta = 0.01,
	\end{aligned} \nonumber
\end{equation}
and the Stage~II penalty function is
\begin{equation}
	\phi(\lambda) =
	\begin{cases}
		10^5 \lambda, & \lambda > 0, \\
		0,            & \lambda \le 0.
	\end{cases} \nonumber
\end{equation}

\subsection{Experiment: AUV in WaterLily}\label{apd-exp3}

The baseline control framework employs NMPC with a prediction horizon of $H = 20$. We evaluate three specific configurations:

\begin{itemize}[leftmargin=1.5em]
	\item \textbf{Basic NMPC:} A baseline control implementation without disturbance compensation.
	\item \textbf{ESO + NMPC:} NMPC augmented with an ESO for disturbance compensation, using a gain bandwidth of $\omega_0 = 15$.
	\item \textbf{Neural Network Observer + NMPC (Ours):} Our proposed framework. The pre-training phase uses $\mathrm{epoch} = 100000$ and $\mathrm{lr} = 0.01$, while the fine-tuning phase uses $\mathrm{epoch} = 15000$, $\mathrm{lr} = 0.01$, and $\epsilon = 0.01$.
\end{itemize}

For the proposed Neural Network Observer in the AUV experiment, the Lyapunov and LMI hyperparameters are
\begin{equation}
	P = \mathrm{diag}(0.1, 0.1, 0.5, 0.5, 1, 1, 1, 1, 1, 1),\quad \rho = 0.1,\quad M_w = 0.5,\quad \xi = 10.0,\quad \beta = 0.01, \nonumber
\end{equation}
and the Stage~II penalty function is
\begin{equation}
	\phi(\lambda) =
	\begin{cases}
		10^5 \lambda, & \lambda > 0, \\
		0,            & \lambda \le 0.
	\end{cases} \nonumber
\end{equation}

\subsection{Experiment: X-29 Aircraft}\label{apd-exp1}

The nominal matrices used in the X-29 aircraft benchmark are taken from \cite{1992ntrs.rept09932B}:
\begin{equation}
	A =
	\begin{bmatrix}
		-0.4272{\times}10^{-1} & -0.8541{\times}10^{1} & -0.4451              & -0.3216{\times}10^{2} \\
		-0.7881{\times}10^{-3} & -0.5291               & 0.9896               & 0.1439{\times}10^{-9}  \\
		0.4010{\times}10^{-3}  & 0.3542{\times}10^{1}  & -0.2228              & 0.6150{\times}10^{-8}  \\
		0                      & 0                     & 1                    & 0
	\end{bmatrix},
	\nonumber
\end{equation}
\begin{equation}
	B =
	\begin{bmatrix}
		-0.3385{\times}10^{-1} & -0.9386{\times}10^{-1} & 0.4888{\times}10^{-2}  \\
		-0.1028{\times}10^{-2} & -0.1297{\times}10^{-2} & -0.4054{\times}10^{-3} \\
		0.2718{\times}10^{-1}  & -0.5744{\times}10^{-2} & -0.1351{\times}10^{-1} \\
		0                      & 0                      & 0
	\end{bmatrix},
	\nonumber
\end{equation}
\begin{equation}
	C =
	\begin{bmatrix}
		1        & 0     & 0       & 0                  \\
		0        & 57.3  & 0       & 0                  \\
		0        & 0     & 57.3    & 0                  \\
		0        & 0     & 0       & 57.3               \\
		0.007063 & 4.567 & 0.09867 & -0.3809{\times}10^{-4}
	\end{bmatrix}.
	\nonumber
\end{equation}

The baseline control employs a state-feedback control law $u(t) = -K \hat{x}(t)$, where the gain matrix $K$ is given by:
\begin{equation}
	K = \begin{bmatrix}
		-16.99 & 171.31 & 148.15 & 308.07  \\[0.3em]
		-25.65 & -6.74  & -13.01 & 137.50  \\[0.3em]
		5.36   & -91.89 & -76.05 & -130.60
	\end{bmatrix}. \nonumber
\end{equation}
The architectures of the four evaluated observers are detailed below:

\begin{itemize}[leftmargin=1.5em]
	\item \textbf{Linear Baseline:} A conventional observer consisting solely of the linear shortcut $W^{[L+2]}$ (equivalent to zero hidden layers), representing a traditional LTI observer. Parameters are determined via pole-placement.
	\item \textbf{Tiny NN:} A minimal 3-layer architecture with a $[3, 3, 3]$ hidden unit configuration (9 total hidden neurons), solved using pole-placement combined with the \textit{feasp} solver.
	\item \textbf{Small NN:} A medium-capacity 3-layer architecture with a $[32, 64, 32]$ hidden unit configuration (128 total hidden neurons), optimized through pole-placement, the \textit{feasp} solver, and training.
	\item \textbf{Large NN:} An 8-layer deep ResNet with a $[32, 64, 64, 128, 128, 64, 64, 32]$ hidden unit configuration (576 total hidden neurons), optimized using pole-placement, the \textit{feasp} solver, and the proposed training framework.
\end{itemize}

For the pole-placement method, the target poles are assigned at $[-0.5, -1 \pm 0.2i, -1.5 \pm 0.5i, -2 \pm 0.2i, -2.5]$.

In the \textit{feasp} solver configuration, the gain parameter is set to $\epsilon = 0.01$ with a maximum of \texttt{max\_iter} $= 10$ iterations.

For the neural network optimization, the pre-training phase uses $\mathrm{epoch} = 50000$ and $\mathrm{lr} = 0.01$, while the fine-tuning phase uses $\mathrm{epoch} = 10000$, $\mathrm{lr} = 0.01$, and a gain parameter of $\epsilon = 0.01$. For the Lyapunov and LMI objectives, we use
\begin{equation}
	P = I_8,\quad \rho = 0.1,\quad M_w = 0.5,\quad \xi = 10.0,\quad \beta = 0.01, \nonumber
\end{equation}
and the Stage~II penalty function is
\begin{equation}
	\phi(\lambda) =
	\begin{cases}
		10^5 \lambda, & \lambda > 0, \\
		0,            & \lambda \le 0.
	\end{cases} \nonumber
\end{equation}

\subsection{Other Details}\label{apd-expo}

All experiments were conducted on a server equipped with a 12th Gen Intel(R) Core(TM) i5-12600KF CPU, an NVIDIA GeForce RTX 4060 GPU, and 32.0 GB of RAM. The software environment included Windows 10, MATLAB R2024a, Python 3.9 (for the solver), Python 3.8 (for training), and PyTorch 2.4.1.

\newpage
\section*{NeurIPS Paper Checklist}

\begin{enumerate}

	\item {\bf Claims}
	\item[] Question: Do the main claims made in the abstract and introduction accurately reflect the paper's contributions and scope?
	\item[] Answer: \answerYes{}
	\item[] Justification: The abstract and introduction summarize the paper's main contributions, results, and scope.
	\item[] Guidelines:
	      \begin{itemize}
		      \item The answer \answerNA{} means that the abstract and introduction do not include the claims made in the paper.
		      \item The abstract and/or introduction should clearly state the claims made, including the contributions made in the paper and important assumptions and limitations. A \answerNo{} or \answerNA{} answer to this question will not be perceived well by the reviewers.
		      \item The claims made should match theoretical and experimental results, and reflect how much the results can be expected to generalize to other settings.
		      \item It is fine to include aspirational goals as motivation as long as it is clear that these goals are not attained by the paper.
	      \end{itemize}

	\item {\bf Limitations}
	\item[] Question: Does the paper discuss the limitations of the work performed by the authors?
	\item[] Answer: \answerYes{}
	\item[] Justification: The limitations of the work are discussed in Appendix \ref{sec-lim}.
	\item[] Guidelines:
	      \begin{itemize}
		      \item The answer \answerNA{} means that the paper has no limitation while the answer \answerNo{} means that the paper has limitations, but those are not discussed in the paper.
		      \item The authors are encouraged to create a separate ``Limitations'' section in their paper.
		      \item The paper should point out any strong assumptions and how robust the results are to violations of these assumptions (e.g., independence assumptions, noiseless settings, model well-specification, asymptotic approximations only holding locally). The authors should reflect on how these assumptions might be violated in practice and what the implications would be.
		      \item The authors should reflect on the scope of the claims made, e.g., if the approach was only tested on a few datasets or with a few runs. In general, empirical results often depend on implicit assumptions, which should be articulated.
		      \item The authors should reflect on the factors that influence the performance of the approach. For example, a facial recognition algorithm may perform poorly when image resolution is low or images are taken in low lighting. Or a speech-to-text system might not be used reliably to provide closed captions for online lectures because it fails to handle technical jargon.
		      \item The authors should discuss the computational efficiency of the proposed algorithms and how they scale with dataset size.
		      \item If applicable, the authors should discuss possible limitations of their approach to address problems of privacy and fairness.
		      \item While the authors might fear that complete honesty about limitations might be used by reviewers as grounds for rejection, a worse outcome might be that reviewers discover limitations that aren't acknowledged in the paper. The authors should use their best judgment and recognize that individual actions in favor of transparency play an important role in developing norms that preserve the integrity of the community. Reviewers will be specifically instructed to not penalize honesty concerning limitations.
	      \end{itemize}

	\item {\bf Theory assumptions and proofs}
	\item[] Question: For each theoretical result, does the paper provide the full set of assumptions and a complete (and correct) proof?
	\item[] Answer: \answerYes{}
	\item[] Justification: The theoretical results are stated in Section \ref{sec-results}, with complete proofs provided in Appendix \ref{prf-all}.
	\item[] Guidelines:
	      \begin{itemize}
		      \item The answer \answerNA{} means that the paper does not include theoretical results.
		      \item All the theorems, formulas, and proofs in the paper should be numbered and cross-referenced.
		      \item All assumptions should be clearly stated or referenced in the statement of any theorems.
		      \item The proofs can either appear in the main paper or the supplemental material, but if they appear in the supplemental material, the authors are encouraged to provide a short proof sketch to provide intuition.
		      \item Inversely, any informal proof provided in the core of the paper should be complemented by formal proofs provided in appendix or supplemental material.
		      \item Theorems and Lemmas that the proof relies upon should be properly referenced.
	      \end{itemize}

	\item {\bf Experimental result reproducibility}
	\item[] Question: Does the paper fully disclose all the information needed to reproduce the main experimental results of the paper to the extent that it affects the main claims and/or conclusions of the paper (regardless of whether the code and data are provided or not)?
	\item[] Answer: \answerYes{}
	\item[] Justification: The implementation details needed to reproduce the main experimental results are provided in Section \ref{sec-experiments} and Appendix \ref{apd-exp}.
	\item[] Guidelines:
	      \begin{itemize}
		      \item The answer \answerNA{} means that the paper does not include experiments.
		      \item If the paper includes experiments, a \answerNo{} answer to this question will not be perceived well by the reviewers: Making the paper reproducible is important, regardless of whether the code and data are provided or not.
		      \item If the contribution is a dataset and\slash or model, the authors should describe the steps taken to make their results reproducible or verifiable.
		      \item Depending on the contribution, reproducibility can be accomplished in various ways. For example, if the contribution is a novel architecture, describing the architecture fully might suffice, or if the contribution is a specific model and empirical evaluation, it may be necessary to either make it possible for others to replicate the model with the same dataset, or provide access to the model. In general. releasing code and data is often one good way to accomplish this, but reproducibility can also be provided via detailed instructions for how to replicate the results, access to a hosted model (e.g., in the case of a large language model), releasing of a model checkpoint, or other means that are appropriate to the research performed.
		      \item While NeurIPS does not require releasing code, the conference does require all submissions to provide some reasonable avenue for reproducibility, which may depend on the nature of the contribution. For example
		            \begin{enumerate}
			            \item If the contribution is primarily a new algorithm, the paper should make it clear how to reproduce that algorithm.
			            \item If the contribution is primarily a new model architecture, the paper should describe the architecture clearly and fully.
			            \item If the contribution is a new model (e.g., a large language model), then there should either be a way to access this model for reproducing the results or a way to reproduce the model (e.g., with an open-source dataset or instructions for how to construct the dataset).
			            \item We recognize that reproducibility may be tricky in some cases, in which case authors are welcome to describe the particular way they provide for reproducibility. In the case of closed-source models, it may be that access to the model is limited in some way (e.g., to registered users), but it should be possible for other researchers to have some path to reproducing or verifying the results.
		            \end{enumerate}
	      \end{itemize}

	\item {\bf Open access to data and code}
	\item[] Question: Does the paper provide open access to the data and code, with sufficient instructions to faithfully reproduce the main experimental results, as described in supplemental material?
	\item[] Answer: \answerYes{}
	\item[] Justification: The code and reproduction instructions are available at \url{https://github.com/Berry-Myon/LearningNeuralNetworkObserver} and will also be included in the supplementary material.
	\item[] Guidelines:
	      \begin{itemize}
		      \item The answer \answerNA{} means that paper does not include experiments requiring code.
		      \item Please see the NeurIPS code and data submission guidelines (\url{https://neurips.cc/public/guides/CodeSubmissionPolicy}) for more details.
		      \item While we encourage the release of code and data, we understand that this might not be possible, so \answerNo{} is an acceptable answer. Papers cannot be rejected simply for not including code, unless this is central to the contribution (e.g., for a new open-source benchmark).
		      \item The instructions should contain the exact command and environment needed to run to reproduce the results. See the NeurIPS code and data submission guidelines (\url{https://neurips.cc/public/guides/CodeSubmissionPolicy}) for more details.
		      \item The authors should provide instructions on data access and preparation, including how to access the raw data, preprocessed data, intermediate data, and generated data, etc.
		      \item The authors should provide scripts to reproduce all experimental results for the new proposed method and baselines. If only a subset of experiments are reproducible, they should state which ones are omitted from the script and why.
		      \item At submission time, to preserve anonymity, the authors should release anonymized versions (if applicable).
		      \item Providing as much information as possible in supplemental material (appended to the paper) is recommended, but including URLs to data and code is permitted.
	      \end{itemize}

	\item {\bf Experimental setting/details}
	\item[] Question: Does the paper specify all the training and test details (e.g., data splits, hyperparameters, how they were chosen, type of optimizer) necessary to understand the results?
	\item[] Answer: \answerYes{}
	\item[] Justification: The training and test settings are detailed in Section \ref{sec-experiments} and Appendix \ref{apd-exp}.
	\item[] Guidelines:
	      \begin{itemize}
		      \item The answer \answerNA{} means that the paper does not include experiments.
		      \item The experimental setting should be presented in the core of the paper to a level of detail that is necessary to appreciate the results and make sense of them.
		      \item The full details can be provided either with the code, in appendix, or as supplemental material.
	      \end{itemize}

	\item {\bf Experiment statistical significance}
	\item[] Question: Does the paper report error bars suitably and correctly defined or other appropriate information about the statistical significance of the experiments?
	\item[] Answer: \answerYes{}
	\item[] Justification: We report the mean and either the standard deviation (SD) or the standard error of the mean (SEM) for tracking errors over multiple trials with randomized perturbations. These quantities are explicitly indicated in Tables \ref{tbl-NN_cmp}, \ref{tbl-quad}, and \ref{tbl-fluid} and Figures \ref{fig-NN_cmp}, \ref{fig-quad}, and \ref{fig-AUV}.
	\item[] Guidelines:
	      \begin{itemize}
		      \item The answer \answerNA{} means that the paper does not include experiments.
		      \item The authors should answer \answerYes{} if the results are accompanied by error bars, confidence intervals, or statistical significance tests, at least for the experiments that support the main claims of the paper.
		      \item The factors of variability that the error bars are capturing should be clearly stated (for example, train/test split, initialization, random drawing of some parameter, or overall run with given experimental conditions).
		      \item The method for calculating the error bars should be explained (closed form formula, call to a library function, bootstrap, etc.)
		      \item The assumptions made should be given (e.g., Normally distributed errors).
		      \item It should be clear whether the error bar is the standard deviation or the standard error of the mean.
		      \item It is OK to report 1-sigma error bars, but one should state it. The authors should preferably report a 2-sigma error bar than state that they have a 96\% CI, if the hypothesis of Normality of errors is not verified.
		      \item For asymmetric distributions, the authors should be careful not to show in tables or figures symmetric error bars that would yield results that are out of range (e.g., negative error rates).
		      \item If error bars are reported in tables or plots, the authors should explain in the text how they were calculated and reference the corresponding figures or tables in the text.
	      \end{itemize}

	\item {\bf Experiments compute resources}
	\item[] Question: For each experiment, does the paper provide sufficient information on the computer resources (type of compute workers, memory, time of execution) needed to reproduce the experiments?
	\item[] Answer: \answerYes{}
	\item[] Justification: The computational resources used for the experiments are reported in Appendix \ref{apd-expo}.
	\item[] Guidelines:
	      \begin{itemize}
		      \item The answer \answerNA{} means that the paper does not include experiments.
		      \item The paper should indicate the type of compute workers CPU or GPU, internal cluster, or cloud provider, including relevant memory and storage.
		      \item The paper should provide the amount of compute required for each of the individual experimental runs as well as estimate the total compute.
		      \item The paper should disclose whether the full research project required more compute than the experiments reported in the paper (e.g., preliminary or failed experiments that didn't make it into the paper).
	      \end{itemize}

	\item {\bf Code of ethics}
	\item[] Question: Does the research conducted in the paper conforms, in every respect, with the NeurIPS Code of Ethics \url{https://neurips.cc/public/EthicsGuidelines}?
	\item[] Answer: \answerYes{}
	\item[] Justification: The research conducted in this paper conforms to the NeurIPS Code of Ethics.
	\item[] Guidelines:
	      \begin{itemize}
		      \item The answer \answerNA{} means that the authors have not reviewed the NeurIPS Code of Ethics.
		      \item If the authors answer \answerNo, they should explain the special circumstances that require a deviation from the Code of Ethics.
		      \item The authors should make sure to preserve anonymity (e.g., if there is a special consideration due to laws or regulations in their jurisdiction).
	      \end{itemize}

	\item {\bf Broader impacts}
	\item[] Question: Does the paper discuss both potential positive societal impacts and negative societal impacts of the work performed?
	\item[] Answer: \answerNA{}
	\item[] Justification: The paper focuses on foundational mathematical methods for observer stability in dynamical systems. We do not identify a direct path to negative societal applications or significant immediate societal impacts beyond general improvements in control engineering.
	\item[] Guidelines:
	      \begin{itemize}
		      \item The answer \answerNA{} means that there is no societal impact of the work performed.
		      \item If the authors answer \answerNA{} or \answerNo, they should explain why their work has no societal impact or why the paper does not address societal impact.
		      \item Examples of negative societal impacts include potential malicious or unintended uses (e.g., disinformation, generating fake profiles, surveillance), fairness considerations (e.g., deployment of technologies that could make decisions that unfairly impact specific groups), privacy considerations, and security considerations.
		      \item The conference expects that many papers will be foundational research and not tied to particular applications, let alone deployments. However, if there is a direct path to any negative applications, the authors should point it out. For example, it is legitimate to point out that an improvement in the quality of generative models could be used to generate Deepfakes for disinformation. On the other hand, it is not needed to point out that a generic algorithm for optimizing neural networks could enable people to train models that generate Deepfakes faster.
		      \item The authors should consider possible harms that could arise when the technology is being used as intended and functioning correctly, harms that could arise when the technology is being used as intended but gives incorrect results, and harms following from (intentional or unintentional) misuse of the technology.
		      \item If there are negative societal impacts, the authors could also discuss possible mitigation strategies (e.g., gated release of models, providing defenses in addition to attacks, mechanisms for monitoring misuse, mechanisms to monitor how a system learns from feedback over time, improving the efficiency and accessibility of ML).
	      \end{itemize}

	\item {\bf Safeguards}
	\item[] Question: Does the paper describe safeguards that have been put in place for responsible release of data or models that have a high risk for misuse (e.g., pre-trained language models, image generators, or scraped datasets)?
	\item[] Answer: \answerNA{}
	\item[] Justification: The paper poses no such risks.
	\item[] Guidelines:
	      \begin{itemize}
		      \item The answer \answerNA{} means that the paper poses no such risks.
		      \item Released models that have a high risk for misuse or dual-use should be released with necessary safeguards to allow for controlled use of the model, for example by requiring that users adhere to usage guidelines or restrictions to access the model or implementing safety filters.
		      \item Datasets that have been scraped from the Internet could pose safety risks. The authors should describe how they avoided releasing unsafe images.
		      \item We recognize that providing effective safeguards is challenging, and many papers do not require this, but we encourage authors to take this into account and make a best faith effort.
	      \end{itemize}

	\item {\bf Licenses for existing assets}
	\item[] Question: Are the creators or original owners of assets (e.g., code, data, models), used in the paper, properly credited and are the license and terms of use explicitly mentioned and properly respected?
	\item[] Answer: \answerYes{}
	\item[] Justification: All code used for comparison methods is open source and is properly cited and credited in the experiments.
	\item[] Guidelines:
	      \begin{itemize}
		      \item The answer \answerNA{} means that the paper does not use existing assets.
		      \item The authors should cite the original paper that produced the code package or dataset.
		      \item The authors should state which version of the asset is used and, if possible, include a URL.
		      \item The name of the license (e.g., CC-BY 4.0) should be included for each asset.
		      \item For scraped data from a particular source (e.g., website), the copyright and terms of service of that source should be provided.
		      \item If assets are released, the license, copyright information, and terms of use in the package should be provided. For popular datasets, \url{paperswithcode.com/datasets} has curated licenses for some datasets. Their licensing guide can help determine the license of a dataset.
		      \item For existing datasets that are re-packaged, both the original license and the license of the derived asset (if it has changed) should be provided.
		      \item If this information is not available online, the authors are encouraged to reach out to the asset's creators.
	      \end{itemize}

	\item {\bf New assets}
	\item[] Question: Are new assets introduced in the paper well documented and is the documentation provided alongside the assets?
	\item[] Answer: \answerNA{}
	\item[] Justification: The paper does not release new assets.
	\item[] Guidelines:
	      \begin{itemize}
		      \item The answer \answerNA{} means that the paper does not release new assets.
		      \item Researchers should communicate the details of the dataset\slash code\slash model as part of their submissions via structured templates. This includes details about training, license, limitations, etc.
		      \item The paper should discuss whether and how consent was obtained from people whose asset is used.
		      \item At submission time, remember to anonymize your assets (if applicable). You can either create an anonymized URL or include an anonymized zip file.
	      \end{itemize}

	\item {\bf Crowdsourcing and research with human subjects}
	\item[] Question: For crowdsourcing experiments and research with human subjects, does the paper include the full text of instructions given to participants and screenshots, if applicable, as well as details about compensation (if any)?
	\item[] Answer: \answerNA{}
	\item[] Justification: The paper does not involve crowdsourcing nor research with human subjects.
	\item[] Guidelines:
	      \begin{itemize}
		      \item The answer \answerNA{} means that the paper does not involve crowdsourcing nor research with human subjects.
		      \item Including this information in the supplemental material is fine, but if the main contribution of the paper involves human subjects, then as much detail as possible should be included in the main paper.
		      \item According to the NeurIPS Code of Ethics, workers involved in data collection, curation, or other labor should be paid at least the minimum wage in the country of the data collector.
	      \end{itemize}

	\item {\bf Institutional review board (IRB) approvals or equivalent for research with human subjects}
	\item[] Question: Does the paper describe potential risks incurred by study participants, whether such risks were disclosed to the subjects, and whether Institutional Review Board (IRB) approvals (or an equivalent approval/review based on the requirements of your country or institution) were obtained?
	\item[] Answer: \answerNA{}
	\item[] Justification: The paper does not involve crowdsourcing nor research with human subjects.
	\item[] Guidelines:
	      \begin{itemize}
		      \item The answer \answerNA{} means that the paper does not involve crowdsourcing nor research with human subjects.
		      \item Depending on the country in which research is conducted, IRB approval (or equivalent) may be required for any human subjects research. If you obtained IRB approval, you should clearly state this in the paper.
		      \item We recognize that the procedures for this may vary significantly between institutions and locations, and we expect authors to adhere to the NeurIPS Code of Ethics and the guidelines for their institution.
		      \item For initial submissions, do not include any information that would break anonymity (if applicable), such as the institution conducting the review.
	      \end{itemize}

	\item {\bf Declaration of LLM usage}
	\item[] Question: Does the paper describe the usage of LLMs if it is an important, original, or non-standard component of the core methods in this research? Note that if the LLM is used only for writing, editing, or formatting purposes and does \emph{not} impact the core methodology, scientific rigor, or originality of the research, declaration is not required.
	\item[] Answer: \answerNA{}
	\item[] Justification: The core method development in this research does not involve LLMs as important, original, or non-standard components of the method.
	\item[] Guidelines:
	      \begin{itemize}
		      \item The answer \answerNA{} means that the core method development in this research does not involve LLMs as any important, original, or non-standard components.
		      \item Please refer to our LLM policy in the NeurIPS handbook for what should or should not be described.
	      \end{itemize}

\end{enumerate}

\end{document}